\theoremstyle{plain}
\newtheorem{theorem}{Theorem}[section]
\newtheorem{lemma}[theorem]{Lemma}
\theoremstyle{definition}
\theoremstyle{remark}
\icmltitlerunning{Anti-Robust Weighted Regularization}
\begin{document}

%

\twocolumn[
\icmltitle{Improving Adversarial Robustness \\ 
by Putting More Regularizations on Less Robust Samples}



\icmlsetsymbol{equal}{*}

\begin{icmlauthorlist}
    
\icmlauthor{Dongyoon Yang}{snu}
\icmlauthor{Insung Kong}{snu}
\icmlauthor{Yongdai Kim}{snu}
\end{icmlauthorlist}

\icmlaffiliation{snu}{Department of Statistics, Seoul National University, Seoul, Republic of Korea}
\icmlcorrespondingauthor{\text{Yongdai Kim}}{\text{ydkim0903@gmail.com}}

\icmlkeywords{Machine Learning, ICML}

\vskip 0.3in
]



\printAffiliationsAndNotice{}  

\begin{abstract}
Adversarial training, which is to enhance robustness against adversarial attacks, has received much attention because it is easy to generate human-imperceptible perturbations of data to deceive a given deep neural network. 
In this paper, we propose a new adversarial training algorithm that is theoretically well motivated and empirically superior to other existing algorithms. 
A novel feature of the proposed algorithm is to apply more regularization to data vulnerable to adversarial attacks than other existing regularization algorithms do. Theoretically, we show that our algorithm can be understood as an algorithm of minimizing the regularized empirical risk motivated from a newly derived upper bound of the robust risk. 
Numerical experiments illustrate that our proposed algorithm improves the generalization (accuracy on examples) and robustness (accuracy on adversarial attacks) simultaneously to achieve the state-of-the-art performance.
\end{abstract}

\section{Introduction}

It is easy to generate human-imperceptible perturbations that put prediction of a deep neural network (DNN) out. Such perturbed samples are called \textit{adversarial examples} \citep{szegedy2014intriguing} and algorithms for generating adversarial examples are called \textit{adversarial attacks}. 
It is well known that adversarial attacks can greatly reduce the accuracy of DNNs, for example from about 96\% accuracy on clean data to almost zero accuracy on adversarial examples \citep{madry2018towards}. This vulnerability of DNNs can cause serious security problems when DNNs are applied to  security critical applications \citep{kurakin2016adversarial,  jiang2019blackbox} such as medicine \citep{ma2021understanding, finlayson2019adversarial} and autonomous driving \citep{kurakin2016adversarial, deng2020analysis, morgulis2019fooling, li2020adaptive}.

Adversarial training, which is to enhance robustness against adversarial attacks,
has received much attention. Various adversarial training algorithms can be 
categorized into two types.
The first one is to learn prediction models by minimizing
the robust risk - the risk for adversarial examples.
PGD-AT \citep{madry2018towards} is the first of its kinds and
various modifications including \cite{zhang2020attacks, ding2020mma, zhang2021geometry} 
have been proposed since then.

The second type of adversarial training algorithms is to minimize the regularized risk which is
the sum of the empirical risk for clean examples and a regularized term related to adversarial robustness.
TRADES \citep{zhang2019theoretically} decomposes the robust risk into the sum
of the natural and boundary risks, where the first one is the risk for clean examples
and the second one is the remaining part, and replaces them to their upper bounds to have
the regularized risk. HAT \citep{rade2022reducing} modifies the regularization term of TRADES
by adding an additional regularization term based on helper samples. 

The aim of this paper is to develop a new adversarial training algorithm for DNNs, which is theoretically well motivated and empirically superior to other existing competitors. 
Our algorithm modifies the regularization term of TRADES \citep{zhang2019theoretically} to put more regularization on less robust samples. This new regularization term is motivated
by an upper bound of the boundary risk. 

Our proposed regularized term is similar to that used in MART \citep{wang2020improving}.
The two key differences are that (1) the objective function of MART consists of the sum of the robust risk and regularization term while ours consists of the sum of the natural risk and regularization term and
(2) our algorithm regularizes less robust samples more but MART regularizes less accurate samples more.
Note that our algorithm is theoretically well motivated from an upper bound of the robust risk 
but no such theoretical explanation of MART is available. In numerical studies, we demonstrate that
our algorithm outperforms MART as well as TRADES with significant margins.

\subsection{Our Contributions}

We propose a new adversarial training algorithm. Novel features of our algorithm compared to other existing adversarial training  algorithms are that it is theoretically well motivated and empirically superior. Our contributions can be summarized as follows:
\begin{itemize}
\item We derive an upper bound of the robust risk for multi-classification problems
\item As a surrogate version of this upper bound, we propose
a new regularized risk.
\item We develop an adversarial training algorithm that
learns a robust prediction model by minimizing the proposed regularized risk.
\item By analyzing benchmark data sets, we show that our proposed algorithm is superior to other competitors in view of 
the generalization (accuracy on clean examples) and robustness (accuracy on adversarial examples) simultaneously to achieve the state-of-the-art performance.
\item We illustrate that our algorithm is helpful to improve the fairness of the prediction model in the sense that the error rates of each class become more similar compared to TRADES and HAT.
\end{itemize}
\section{Preliminaries}
\label{sec2}
\subsection{Robust Population Risk} 
\label{sec2_1}
Let $\mathcal{X} \subset \mathbb{R}^d$ be the input space, $\mathcal{Y} = \left\{1, \cdots, C\right\}$
be the set of output labels and $f_{\bm{\theta}} : \mathcal{X} \rightarrow \mathbb{R}^{C}$ be the score function parameterized by the neural network parameters $\bm{\theta}$ (the vector of weights and biases) such that $\mathbf{p}_{\bm{\theta}}(\cdot|\bm{x}) =\operatorname{softmax}(f_{\bm{\theta}}(\bm{x}))$ is the vector of the conditional
class probabilities. Let $F_{\bm{\theta}}(\bm{x}) = \underset{c}{\operatorname{argmax}} [f_{\bm{\theta}}(\bm{x})]_c,$  
$\mathcal{B}_{p}(\bm{x}, \varepsilon) = \left\{\bm{x}' \in \mathcal{X} : \lVert \bm{x}- \bm{x}' \rVert_p \leq \varepsilon \right\}$ and $\mathbbm{1}(\cdot)$ be the indicator function. Let capital letters $\mathbf{X, Y}$ denote random variables or vectors and small letters $\bm{x}, y$ denote their realizations.

The robust population risk used in the adversarial training is defined as
\begin{equation}
\label{eqn1}
\mathcal{R}_{\text{rob}}(\theta):=\mathbb{E}_{\mathbf{(X,Y)}} \;  \underset{\mathbf{X'} \in \mathcal{B}_p(\mathbf{X}, \varepsilon) }{\max\;\;\;} \mathbbm{1}\left\{F_{\bm{\theta}}(\mathbf{X'}) \neq \mathbf{Y} \right\},
\end{equation}
where $\mathbf{X}$ and $\mathbf{Y}$ are a random vector in $\mathcal{X}$  and a random variable in $\mathcal{Y}$, respectively. 
Most adversarial training algorithms learn $\bm{\theta}$
by minimizing an empirical version of the above robust population risk. 
In turn, most empirical versions of (\ref{eqn1})
require to generate an \textit{adversarial example}
which is a surrogate version of
$$\bm{x}^{\text{adv}}:= \underset{\bm{x}' \in \mathcal{B}_p(\bm{x}, \varepsilon) }{\operatorname{argmax}\;\;\;} \mathbbm{1}\left\{F_{\theta}(\bm{x'}) \neq y \right\}.$$
Any method of generating an adversarial example is called an \textit{adversarial attack.}

\subsection{Algorithms for Generating Adversarial Examples}
Existing adversarial attacks can be categorized into either the white-box attack \citep{goodfellow2015explaining, madry2018towards, carlini2017evaluating, croce2020minimally} or the black-box attack \citep{papernot2016transferability, papernot2017practical, chen2017zoo, ilyas2018blackbox, papernot2016science}. For the white-box attack, the model structure and parameters are known to adversaries who use this information for generating adversarial examples, while outputs for given inputs are only available to adversaries for the black-box attack.
The most popular method for the white-box attack is PGD (Projected Gradient Descent) with infinite norm \citep{madry2018towards}. Let $\ell(\bm{x}'|\theta,\bm{x},y)$ be a surrogate loss of $\mathbbm{1}\left\{F_{\theta}(\bm{x'}) \neq y \right\}$ for given $\bm{\theta},\bm{x},y$.
PGD finds the adversarial example 
by applying the gradient ascent algorithm to $\ell$  to update $\bm{x}'$ and projecting it to $\mathcal{B}_{\infty}(\bm{x}, \varepsilon).$ That is, the update rule of PGD is 
\begin{equation}
\footnotesize
\label{pgd}
    \bm{x}^{(m+1)}=\bm{\Pi}_{\mathcal{B}_{\infty}(\bm{x}, \varepsilon) }\left(\bm{x}^{(m)} + \eta \operatorname{sgn}\left(\nabla_{\bm{x}^{(m)}} \ell(\bm{x}^{(m)}|\bm{\theta},\bm{x},y)\right)\right),
\end{equation}
where $\eta>0$ is the step size, 
$\bm{\Pi}_{\mathcal{B}_{\infty}(\bm{x}, \varepsilon)}(\cdot)$ is the projection operator to $\mathcal{B}_{\infty}(\bm{x}, \varepsilon)$ and $\bm{x}^{(0)}=\bm{x}$. We define $\bm{x}^{\text{pgd}}$ as $\bm{x}^{\text{pgd}} := \lim \limits_{m \rightarrow \infty} \bm{x}^{(m)}$ and denote the proxy by $\widehat{\bm{x}}^{\text{pgd}}=\bm{x}^{(M)}$ with finite step $M$.
For the surrogate loss $\ell$, the cross entropy \citep{madry2018towards}
or the KL divergence \citep{zhang2019theoretically} is used.


For the black-box attack, an adversary generates a dataset $\left\{\bm{x}_i, \tilde{y}_i \right\}_{i=1}^n$ where $\tilde{y}_i$ is an output of a given input $\bm{x}_i$.
Then, the adversary trains a substitute prediction model based on this data set, and generates adversarial examples from the substitute prediction model by PGD \citep{papernot2017practical}.

\subsection{Review of Adversarial Training Algorithms}
We review some of the adversarial training algorithms which, we think, are related to our proposed algorithm. Typically, adversarial training algorithms consist of the maximization and minimization steps. In the maximization step, we generate adversarial examples for given $\bm{\theta}$, and in the minimization step, we fix the adversarial examples and update $\bm{\theta}$. In the followings, we denote $\bm{\widehat{x}}^{\text{pgd}}_{i}$ as the adversarial example corresponding to
$(\bm{x}_i, y_i)$ generated by PGD.

\subsubsection{Algorithms minimizing the robust risk directly}

\paragraph{PGD-AT} \citet{madry2018towards} proposes PGD-AT which 
updates $\bm{\theta}$ by minimizing
\begin{equation*}
    \sum\limits_{i=1}^n \ell_{\text{ce}}(f_{\bm{\theta}}(\bm{\widehat{x}}^{\text{pgd}}_{i}), y_i), 
\end{equation*}
where $\ell_{\text{ce}}$ is the cross-entropy loss.

\paragraph{GAIR-AT}
Geometry Aware Instance Reweighted Adversarial Training (GAIR-AT) \citep{zhang2021geometry} 
is a modification of PGD-AT, where
the weighted robust risk is minimized and
more weights are given to samples closer to the
decision boundary. To be more specific, the weighted empirical risk of GAIR-AT is given as
\begin{equation*}
    \sum\limits_{i=1}^n w_{\theta}(\bm{x}_i, y_i) \ell_{\text{ce}}(f_{\bm{\theta}}(\widehat{\bm{x}}^{\text{pgd}}_i), y_i),
\end{equation*}
where $\kappa_{\theta}(\bm{x}_i, y_i) = \min \left( \min (\{t : F_{\bm{\theta}}(\bm{x}_i^{(t)}) \neq y_i \}), T \right)$ for a prespecified maximum iteration $T$ and $w_{\theta}(\bm{x}_i, y_i)= (1+\operatorname{tanh}(5(1-2\kappa_{\theta}(\bm{x}_i, y_i)/T))) / 2$.

There are other similar modifications of PGA-AT 
including Max-Margin Adversarial (MMA) Training  \citep{ding2020mma} and Friendly Adversarial Training (FAT) \citep{zhang2020attacks}.


\subsubsection{Algorithms minimizing a regularized empirical risk}
\label{al_reg}
Robust risk, natural risk and boundary risk are defined by 
\begin{align*}
    \mathcal{R}_{\text{rob}}(\bm{\theta}) &= \mathbb{E}_{(\mathbf{X}, Y)}\mathbbm{1}\left\{\exists \mathbf{X}'\in \mathcal{B}_p(\mathbf{X}, \varepsilon) : F_{\bm{\theta}}(\mathbf{X}') \neq Y \right\},  \\
    \mathcal{R}_{\text{nat}}(\bm{\theta}) &= \mathbb{E}_{(\mathbf{X}, Y)}\mathbbm{1}\left\{ F_{\bm{\theta}}(\mathbf{X}) \neq Y \right\},\\
    \mathcal{R}_{\text{bdy}}(\bm{\theta}) &= \mathbb{E}_{(\mathbf{X}, Y)}\mathbbm{1}\{\exists \mathbf{X}' \in \mathcal{B}_p(\mathbf{X}, \varepsilon) \\ 
    & \qquad\qquad\; : F_{\bm{\theta}}(\mathbf{X})\neq F_{\bm{\theta}}(\mathbf{X}'), F_{\bm{\theta}}(\mathbf{X}) = Y   \}.
\end{align*}
\citet{zhang2019theoretically} shows 
\begin{equation*}
    \mathcal{R}_{\text{rob}}(\bm{\theta}) = \mathcal{R}_{\text{nat}}(\bm{\theta}) + \mathcal{R}_{\text{bdy}}(\bm{\theta}).
\end{equation*}
By treating $\mathcal{R}_{\text{bdy}}(\bm{\theta})$ as the regularization term,
various regularized risks for adversarial training have been proposed.

\paragraph{TRADES}
\citet{zhang2019theoretically} proposes the following regularized empirical risk which is a surrogate version of the upper bound of the robust risk:
\begin{equation*}
\label{trades}
    \sum\limits_{i=1}^n \left\{ \ell_{\text{ce}}(f_{\theta}(\bm{x}_i), y_i) + \lambda \cdot \operatorname{KL} (\mathbf{p}_{\theta }(\cdot|\bm{x}_i)\lVert \mathbf{p}_{\theta }(\cdot|\widehat{\bm{x}}^{\text{pgd}}_i))\right\},
\end{equation*}

\paragraph{HAT}
Helper based training \citep{rade2022reducing} is a variation of TRADES where an additional regularization term based on helper examples
is added to the regularized risk. The role of helper examples is to restrain the decision boundary from
having excessive margins. HAT minimizes the following regularized empirical risk:
\begin{align}
   \sum_{i=1}^{n} \Bigg\{ & \ell_{\text{ce}}\left(f_{\bm{\theta}}\left(\bm{x}_{i}\right), y_{i}\right)+ \lambda \cdot \operatorname{KL}\left(\mathbf{p}_{\theta}\left(\cdot | \bm{x}_{i}\right) \| \mathbf{p}_{\theta} (\cdot | \widehat{\bm{x}}^{\text{pgd}}_{i})\right) \nonumber \\
   & +\gamma \ell_{\text{ce}}\left(f_{\bm{\theta}}(\bm{x}_{i}^{\text{helper}}), F_{\bm{\theta}_{\text{pre}}}(\widehat{\bm{x}}^{\text{pgd}}_{i})
    \right) \Bigg\} \label{hat},
\end{align}
where $\bm{\theta}_{\text{pre}}$ is the parameter of a pre-trained model only with clean examples, 
$\bm{x}_{i}^{\text{helper}} = \bm{x}_i +2 (\widehat{\bm{x}}^{\text{pgd}}_{i} - \bm{x}_i)$.

\paragraph{MART}
Misclassification Aware adveRsarial Training (MART) \citep{wang2020improving} minimizes
\begin{align}
    \sum\limits_{i=1}^n \bigg\{ & \ell_{\text{margin}}(f_{\bm{\theta}}(\widehat{\bm{x}}^{\text{pgd}}_i), y_i) \nonumber \\
    & + \lambda \cdot  \operatorname{KL}(\mathbf{p}_{\bm{\theta}}(\cdot|\bm{x}_i) \lVert \mathbf{p}_{\bm{\theta}}(\cdot|\bm{\widehat{x}}^{\text{pgd}}_{i})) (1- p_{\bm{\theta}}(y_i|\bm{x}_i)) \bigg\}, \label{mart}
\end{align}
where $\ell_{\text{margin}}(f_{\bm{\theta}}(\widehat{\bm{x}}^{\text{pgd}}_i), y_i) = -\log p_{\bm{\theta}}(y_i|\widehat{\bm{x}}^{\text{pgd}}_i)-\log(1-\underset{k \neq y_i}{\max}\; p_{\bm{\theta}}(k|\widehat{\bm{x}}^{\text{pgd}}_i))$.
This objective function can be regarded as the regularized robust risk and thus
MART can be considered as a hybrid algorithm of PGD-AT and TRADES.
\section{Anti-Robust Weighted Regularization (ARoW)}
In this section, we develop a new adversarial training algorithm called Anti-Robust Weighted Regularization (ARoW), which is an algorithm
minimizing a regularized risk. We propose
a new regularized term which applies
more regularization to data vulnerable to
adversarial attacks than other existing algorithms such as TRADES and HAT do.
Our new regularized term is motivated by the upper bound of the robust risk derived in the following section.


\subsection{Upper Bound of the Robust Risk}
\label{upperbound:arow}

In this subsection, we provide an upper bound of the robust risk for multi-classification problem which is stated in the following theorem. The proof is deferred to Appendix \ref{appA}.

\begin{restatable}{theorem}{multi}
\label{theorem1}
    For a given score function $f_{\bm{\theta}}$, let $z(\cdot)$ be an any measurable mapping from $\mathcal{X}$ to $\mathcal{X}$ satisfying
\begin{equation*}
    z(\bm{x}) \in \underset{\bm{x}' \in \mathcal{B}_{p}(\bm{x}, \varepsilon)}{\operatorname{argmax}} \mathbbm{1} \left( F_{\bm{\theta}}(\bm{x}) \neq F_{\bm{\theta}}(\bm{x}')\right).
\end{equation*}
for every $\bm{x} \in \mathcal{X}$. Then, we have
\begin{align} 
    & \mathcal{R}_{\text{rob}}(\bm{\theta}) \leq \mathbb{E}_{(\mathbf{X},Y)} \mathbbm{1}(Y \neq F_{\bm{\theta}}(\mathbf{X})) \nonumber \\
    & + \mathbb{E}_{(\mathbf{X}, Y)}{\mathbbm{1}(F_{\bm{\theta}}(\mathbf{X}) \neq F_{\bm{\theta}}(z(\mathbf{X})))} \mathbbm{1} \left\{ p_{\bm{\theta}}(Y | z(\mathbf{X})) < 1/2 \right\}
    \label{arow:ub}
\end{align}
\end{restatable}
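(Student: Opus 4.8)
The plan is to reduce the claim to the risk decomposition $\mathcal{R}_{\text{rob}}(\bm{\theta}) = \mathcal{R}_{\text{nat}}(\bm{\theta}) + \mathcal{R}_{\text{bdy}}(\bm{\theta})$ recalled in Section~\ref{al_reg}. The first term on the right-hand side of \eqref{arow:ub} is exactly $\mathcal{R}_{\text{nat}}(\bm{\theta}) = \mathbb{E}_{(\mathbf{X},Y)}\mathbbm{1}(Y \neq F_{\bm{\theta}}(\mathbf{X}))$, so the whole theorem follows once I show
$$\mathcal{R}_{\text{bdy}}(\bm{\theta}) \le \mathbb{E}_{(\mathbf{X},Y)}\,\mathbbm{1}(F_{\bm{\theta}}(\mathbf{X}) \neq F_{\bm{\theta}}(z(\mathbf{X})))\,\mathbbm{1}\{p_{\bm{\theta}}(Y\mid z(\mathbf{X})) < 1/2\}.$$

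First I would rewrite $\mathcal{R}_{\text{bdy}}$ in terms of $z$. Since $z(\bm{x})$ is a maximizer of $\bm{x}' \mapsto \mathbbm{1}(F_{\bm{\theta}}(\bm{x}) \neq F_{\bm{\theta}}(\bm{x}'))$ over $\mathcal{B}_p(\bm{x},\varepsilon)$, the existence event $\{\exists\,\bm{x}' \in \mathcal{B}_p(\bm{x},\varepsilon) : F_{\bm{\theta}}(\bm{x}) \neq F_{\bm{\theta}}(\bm{x}')\}$ coincides with $\{F_{\bm{\theta}}(\bm{x}) \neq F_{\bm{\theta}}(z(\bm{x}))\}$, because the former indicator equals $1$ precisely when the maximal value attained by $z$ equals $1$. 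Substituting this identity into the definition of $\mathcal{R}_{\text{bdy}}$ gives $\mathcal{R}_{\text{bdy}}(\bm{\theta}) = \mathbb{E}_{(\mathbf{X},Y)}\,\mathbbm{1}(F_{\bm{\theta}}(\mathbf{X}) \neq F_{\bm{\theta}}(z(\mathbf{X})))\,\mathbbm{1}(F_{\bm{\theta}}(\mathbf{X}) = Y)$, which already shares the factor $\mathbbm{1}(F_{\bm{\theta}}(\mathbf{X}) \neq F_{\bm{\theta}}(z(\mathbf{X})))$ with the target expression.

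It then suffices to compare the two integrands pointwise, i.e.\ to show for every $(\bm{x},y)$ that $\mathbbm{1}(F_{\bm{\theta}}(\bm{x}) = y) \le \mathbbm{1}\{p_{\bm{\theta}}(y\mid z(\bm{x})) < 1/2\}$ on the event $F_{\bm{\theta}}(\bm{x}) \neq F_{\bm{\theta}}(z(\bm{x}))$ (when the shared factor is $0$ both integrands vanish). So I assume $F_{\bm{\theta}}(\bm{x}) = y$ and $F_{\bm{\theta}}(z(\bm{x})) \neq y$, and must deduce $p_{\bm{\theta}}(y\mid z(\bm{x})) < 1/2$. Writing $k := F_{\bm{\theta}}(z(\bm{x})) = \operatorname{argmax}_c p_{\bm{\theta}}(c\mid z(\bm{x}))$, we have $k \neq y$ and $p_{\bm{\theta}}(k\mid z(\bm{x})) \ge p_{\bm{\theta}}(y\mid z(\bm{x}))$; combined with $p_{\bm{\theta}}(y\mid z(\bm{x})) + p_{\bm{\theta}}(k\mid z(\bm{x})) \le \sum_c p_{\bm{\theta}}(c\mid z(\bm{x})) = 1$, this yields $2\,p_{\bm{\theta}}(y\mid z(\bm{x})) \le 1$. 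Integrating the pointwise inequality over $(\mathbf{X},Y)$ then closes the argument.

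The main obstacle is the \emph{strict} inequality $< 1/2$: the chain above only gives $p_{\bm{\theta}}(y\mid z(\bm{x})) \le 1/2$, with equality exactly when $p_{\bm{\theta}}(y\mid z(\bm{x})) = p_{\bm{\theta}}(k\mid z(\bm{x})) = 1/2$, that is, when the argmax defining $F_{\bm{\theta}}(z(\bm{x}))$ is tied between $y$ and $k$. I expect to remove this boundary case by invoking uniqueness of the argmax, so that $k \neq y$ forces $p_{\bm{\theta}}(k\mid z(\bm{x})) > p_{\bm{\theta}}(y\mid z(\bm{x}))$ and hence the strict inequality, or equivalently by noting that such ties occur on a null set that does not affect the expectation; this is the one place where the tie-breaking convention for $F_{\bm{\theta}}$ must be pinned down.
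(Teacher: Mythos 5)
Your proposal is correct and follows essentially the same route as the paper's own proof: the decomposition $\mathcal{R}_{\text{rob}} = \mathcal{R}_{\text{nat}} + \mathcal{R}_{\text{bdy}}$, the use of the argmax property of $z$ to identify the existence event with $\{F_{\bm{\theta}}(\mathbf{X}) \neq F_{\bm{\theta}}(z(\mathbf{X}))\}$, and the observation that $Y$ not being the predicted class of $z(\mathbf{X})$ forces $p_{\bm{\theta}}(Y \mid z(\mathbf{X})) \leq 1/2$ (the paper routes this through Lemma~\ref{lemma2} and the intermediate event $\{F_{\bm{\theta}}(\mathbf{X}) \neq F_{\bm{\theta}}(z(\mathbf{X})),\, Y \neq F_{\bm{\theta}}(z(\mathbf{X}))\}$, but the logic is the same). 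The strictness issue you flag at the end is genuine but is not a defect relative to the paper: its proof makes the identical jump from $\mathbbm{1}\{Y \neq F_{\bm{\theta}}(z(\mathbf{X}))\}$ to $\mathbbm{1}\{p_{\bm{\theta}}(Y \mid z(\mathbf{X})) < 1/2\}$ without comment, so ruling out ties at exactly $1/2$ (via a unique-argmax convention or a null-set argument, as you suggest) is implicitly assumed there as well.
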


The upper bound (\ref{arow:ub}) consists of the two terms : the first term is the natural risk itself and the second term is an upper bound of the boundary risk.
This upper bound is motivated by the upper bound derived in TRADES \citep{zhang2019theoretically}. For binary classification problems, \cite{zhang2019theoretically} shows that
\begin{equation} 
    \mathcal{R}_{\text{rob}}(\bm{\theta}) \leq \mathbb{E}_{(\mathbf{X},Y)} \phi(Y f_{\bm{\theta}}(\mathbf{X})) +
    \mathbb{E}_{\mathbf{X}}{\phi(f_{\bm{\theta}}(\mathbf{X}) f_{\bm{\theta}}(z(\mathbf{X})))}, \label{arow:ub_trades}
\end{equation}
where 
$$
z(\bm{x}) \in \underset{\bm{x}' \in \mathcal{B}_{p}(\bm{x}, \varepsilon)}{\operatorname{argmax}} \phi \left( f_{\bm{\theta}}(\bm{x}) f_{\bm{\theta}}(\bm{x}')\right)
$$
and $\phi(\cdot)$ is an upper bound of
$\mathbbm{1}(\cdot <0).$
Our upper bound (\ref{arow:ub}) is a modification of the upper bound (\ref{arow:ub_trades}) for multiclass problems
where $\phi(\cdot)$ and $f_{\bm{\theta}}$ in (\ref{arow:ub_trades})
are replaced by $\mathbbm{1}(\cdot <0)$
 and $F_{\bm{\theta}},$ respectively.  
A key difference, however, between
(\ref{arow:ub}) and (\ref{arow:ub_trades})
is the term $\mathbbm{1} \left\{ p_{\bm{\theta}}(Y | z(\mathbf{X})) < 1/2 \right\}$ at the last part of (\ref{arow:ub})
that is not in (\ref{arow:ub_trades}). 

It is interesting to see that 
the upper bound in Theorem \ref{theorem1} becomes equal to the robust risk for binary classification problems.
That is, the upper bound (\ref{arow:ub}) is an another formulation of the robust risk. However,
this rephrased formula of the robust risk is useful since it provides a new learning algorithm
when the indicator functions are replaced by their surrogates as we do.

\subsection{Algorithm}

\begin{algorithm}[H]
    \small
    \caption{\footnotesize{Anti-Robust Weighted (ARoW) Regularization}}
    \label{alg:arow}
    \textbf{Input} : network $f_{\bm{\theta}}$, training dataset $\mathcal{D}=\{(\bm{x}_i, y_i) \}_{i=1}^n$, learning rate $\eta$,
    perturbation budget $\varepsilon$, number of PGD steps $M$,
    hyperparameters ($\lambda$, $\alpha$) of (\ref{arow:surrogate}), number of epochs $T$, number of batch $B$, batch size $K$. \\
    \textbf{Output} : adversarially robust network $f_{\bm{\theta}}$
	\begin{algorithmic}[1]
        \FOR{$ t = 1 , \cdots, T$}
            \FOR{$ b = 1 , \cdots, B$}
                \FOR{$ k = 1 , \cdots, K$}
                    \STATE Generate $\widehat{\bm{x}}^{\text{pgd}}_{b, k}$ using PGD$^{(M)}$ in (\ref{pgd}) \\
                    
                \ENDFOR
                \STATE 
                \begin{align*}
                \bm{\theta} \leftarrow 
                \bm{\theta}-\eta\frac{1}{K} \nabla_{\bm{\theta}} & \mathcal{R}_{\text{ARoW}}({\bm{\theta}} \; ; \{(\bm{x}_k, y_k)\}^K_{k=1}, \lambda, \alpha) 
                \end{align*}
            \ENDFOR
        \ENDFOR
        \STATE \textbf{Return} $f_{\bm{\theta}}$
	\end{algorithmic}
\end{algorithm}

By replacing the indicator functions in Theorem
\ref{theorem1} by their smooth proxies,
we propose a new regularized risk and develop
the corresponding adversarial learning algorithm called the Anti-Robust Weighted Regularization (ARoW) algorithm. The four indicator functions in (\ref{arow:ub}) are
replaced by
\begin{itemize}
\item the adversarial example $z(\bm{x})$
is replaced by $\widehat{\bm{x}}^{\text{pgd}}$
obtained by the PGD algorithm with the KL divergence or cross entropy;
\item the term $\mathbbm{1}(Y \neq F_{\bm{\theta}}(\mathbf{X}))$
is replaced by the label smooth cross-entropy \citep{muller2019when}
 $\ell^{\text{LS}}(f_{\bm{\theta}}(\bm{x}), y) = - {\bm{y}_{\alpha}^{\text{LS}}}^{\top} \log \mathbf{p}_{\theta}(\cdot|\bm{x})$ 
 for a given $\alpha>0,$ where $\bm{y}_{\alpha}^{\text{LS}} = (1-\alpha)\mathbf{u}_y + \frac{\alpha}{C}\mathbf{1}_C$, $\mathbf{u}_y \in \mathbb{R}^{C}$ is the one-hot vector whose the $y$-th entry is 1 and $\mathbf{1}_C \in \mathbb{R}^{C}$ is the vector whose entries are all 1;
\item the term $\mathbbm{1}( F_{\bm{\theta}}(\mathbf{X})
\neq F_{\bm{\theta}}(z(\mathbf{X})))$
is replaced by
$ \lambda \cdot \operatorname{KL}(\mathbf{p}_{\bm{\theta}}(\cdot|\mathbf{X}) || \mathbf{p}_{\bm{\theta}}(\cdot|\widehat{\mathbf{X}}^{\text{pgd}}))$
for $\lambda>0;$

\item the term 
$\mathbbm{1} \left\{ p_{\bm{\theta}}(Y | z(\mathbf{X})) < 1/2 \right\}$ is replaced
by its convex upper bound $2 (1- p_{\bm{\theta}}(Y | \widehat{\mathbf{X}}^{\text{pgd}} ));$
 \end{itemize}
 to have the following regularized risk for ARoW, which is a smooth surrogate of the upper bound (\ref{arow:ub}),
 \begin{align}
    &\mathcal{R}_{\text{ARoW}}(\bm{\theta} ; \left\{(\bm{x}_i, y_i) \right\}_{i=1}^n, \lambda) \nonumber \\
    &:= \sum\limits_{i=1}^n \bigg\{ \ell^{\text{LS}}(f_{\bm{\theta}}(\bm{x}_i), y_i) 
    \nonumber \\ 
    & + 2 \lambda \cdot \operatorname{KL}(\mathbf{p}_{\bm{\theta}}(\cdot|\bm{x}_i) || \mathbf{p}_{\bm{\theta}}(\cdot|\widehat{\bm{x}}^{\text{pgd}}_i)) \cdot (1 - p_{\bm{\theta}}(y_i|\widehat{\bm{x}}^{\text{pgd}}_i)) \bigg\}
    \label{arow:surrogate}.
\end{align}
Here, we introduce the regularization parameter $\lambda>0$ to control the robustness of a trained prediction model to adversarial attacks. That is, the regularized risk (\ref{arow:surrogate}) can be considered as a smooth surrogate
of the regularized robust risk
of $\mathcal{R}_{\text{nat}}(\bm{\theta})
+ \lambda \mathcal{R}_{\text{bdy}}(\bm{\theta}).$

We use the label smoothing cross-entropy as a surrogate for $\mathbbm{1}(Y \neq F_{\bm{\theta}}(\mathbf{X}))$
instead of the standard cross-entropy
to estimate the conditional class probabilities $\mathbf{p}_\theta(\cdot|\bm{x})$ more accurately \citep{muller2019when}.
The accurate estimation of $\mathbf{p}_{\bm{\theta}}(\cdot|\bm{x})$ is important since it is used in the regularization term of ARoW. It is well known that DNNs trained by minimizing the
cross-entropy are poorly calibrated \citep{guo2017on}, and so we use the label smoothing cross-entropy technique.

The ARoW algorithm, which learns $\bm{\theta}$
by minimizing $\mathcal{R}_{\text{ARoW}}(\bm{\theta} ; \left\{(\bm{x}_i, y_i) \right\}_{i=1}^n, \lambda),$ is summarized in Algorithm \ref{alg:arow}.

\begin{table*}[ht]
    \footnotesize
    \caption{\textbf{Comparison of ARoW and Other Competitors.} We conduct the experiment three times with different seeds and present the averages of the accuracies with the standard errors in the brackets.}
    \centering
    \begin{tabular}{c|ccc|ccc}
    \toprule
     \multirow{2}{*}{\textbf{Method}} &
        \multicolumn{3}{c|}{CIFAR10 (WRN-34-10)} &
        \multicolumn{3}{c}{CIFAR100 (WRN-34-10)} \\
    \cline{2-7}    
     & \textbf{Stand}  & $\textbf{PGD}^{20}$ & \textbf{AA} & \textbf{Stand}  & $\textbf{PGD}^{20}$ & \textbf{AA} \\
     \hline
    PGD-AT                        & 87.02(0.20) & 57.50(0.12) & 53.98(0.14) & 62.20(0.11) & 32.27(0.05) & 28.66(0.05) \\
    GAIR-AT                         & 85.44(0.10) & \textcolor{red}{67.27}(0.07) & \textcolor{red}{46.41}(0.07) & 62.25(0.12) & \textcolor{red}{30.55}(0.04) & \textcolor{red}{24.19}(0.16) \\
    TRADES                      & 85.86(0.09) & 56.79(0.08) & 54.31(0.08) & 
    62.23(0.07) & 33.45(0.22) & 29.07(0.25) \\
    HAT                         & 86.98(0.10) & 56.81(0.17) & 54.63(0.07) &  60.42(0.03) & 33.75(0.08) & 29.42(0.02) \\
    MART                        & 83.17(0.18) & 57.84(0.13) & 51.84(0.09) & 59.76(0.13) & 33.37(0.11) & 29.68(0.08) \\
    ARoW                     & \textbf{87.65}(0.02) & \textbf{58.38}(0.09) & \textbf{55.15}(0.14) &   \textbf{62.38}(0.07) & \textbf{34.74}(0.11) & \textbf{30.42}(0.10) \\
    \hline
    \hline
     \multirow{2}{*}{\textbf{Method}} &
        \multicolumn{3}{c|}{SVHN (ResNet-18)} &
        \multicolumn{3}{c}{FMNIST (ResNet-18)} \\
    \cline{2-7}
     & \textbf{Stand}  & $\textbf{PGD}^{20}$ & \textbf{AA} & \textbf{Stand}  & $\textbf{PGD}^{20}$ & \textbf{AA} \\
     \hline
    PGD-AT                       & 92.75(0.04) & 59.05(0.46) & 47.66(0.52) &  92.25(0.06) & 87.43(0.03) & 87.19(0.03) \\
    GAIR-AT                         & 91.95(0.40) & \textcolor{red}{70.29}(0.18) & \textcolor{red}{38.26}(0.48) &  90.96(0.10) & 87.25(0.01) & 85.00(0.12) \\ 
    TRADES                      & 91.62(0.49) & 58.75(0.19) & 51.06(0.93) &  91.92(0.04) & 88.33(0.03) & 88.19(0.04) \\
    HAT                         & 91.72(0.12) & 58.66(0.06) & 51.67(0.12) &  92.10(0.11) & 88.09(0.16) & 87.93(0.13) \\
    MART                       & 91.64(0.41) & 60.57(0.27) & 49.95(0.42) &  92.14(0.05) & 88.10(0.10) & 87.88(0.14) \\
    ARoW                     & \textbf{92.79}(0.24) & \textbf{61.14}(0.74) & \textbf{51.93}(0.33) &  \textbf{92.26}(0.05) & \textbf{88.73}(0.03) & \textbf{88.54}(0.04)\\
    \bottomrule
  \end{tabular}
  \label{table:compare}
\end{table*}
\begin{figure*}[ht]
\begin{center}
\begin{minipage}[c]{0.45\linewidth}
\includegraphics[width=\linewidth]{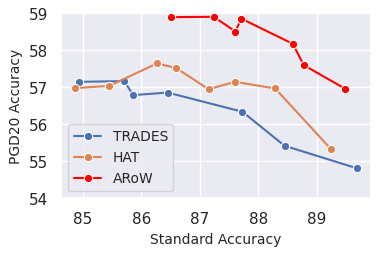}
\end{minipage}
\hfill
\begin{minipage}[c]{0.45\linewidth}
\includegraphics[width=\linewidth]{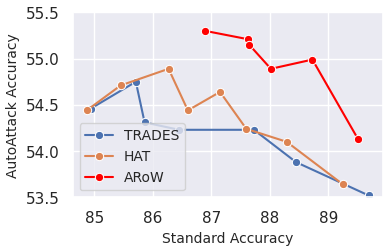}
\end{minipage}
\caption{\textbf{Comparison of ARoW, TRADES and HAT with varying $\lambda$}.
The $x$-axis and $y$-axis are the standard and robust accuracies, respectively. The robust accuracies in the left panel are against PGD$^{20}$ while the robust accuracies in the right panel are against AutoAttack. 
We exclude the results of MART from the figures because its roboust against autoattack and standard accuracies are too low.}
\label{fig1}
\end{center}
\end{figure*}

\paragraph{Comparison to TRADES}

A key difference of the regularized risks
of ARoW and TRADES is that TRADES does not have
the term  $(1- p_{\bm{\theta}}(y_i|\widehat{\bm{x}}^{\text{pgd}}_i))$
at the last part of (\ref{arow:surrogate}).
That is, ARoW puts more regularization to
samples which are vulnerable to
adversarial attacks (i.e. $p_{\bm{\theta}}(y_i|\widehat{\bm{x}}^{\text{pgd}}_i)$ is small). Note that this term is motivated by the tighter upper bound of the robust risk (\ref{arow:ub}) and thus is expected to lead better results. Numerical studies confirm that it really works.

\paragraph{Comparison to MART}
The objective function in MART (\ref{mart}) is similar with the objective function of ARoW. 
But, there are two main differences. 
First, the supervised loss term of ARoW is the label smoothing loss with clean examples, whereas MART uses the margin cross entropy loss with adversarial examples.
Second, the regularization term in MART is proportional to $(1-p_{\bm{\theta}}(y|\bm{x}))$  while that in ARoW is proportional to $(1-p_{\bm{\theta}}(y|\widehat{\bm{x}}^{\text{pgd}})).$
Even though these two terms look similar, their roles are quite different.
In Appendix \ref{thm:cow}, we derive an upper bound of the robust risk which suggests
$p_{\bm{\theta}}(y|\bm{x})$ as the regularization term that is completely opposite to
that for MART. Numerical studies in Appendix \ref{compare-cow-mart} show that 
the corresponding algorithm, called Confidence Weighted regularization (CoW), outperforms MART with large margins, which indicates that the regularization term is MART
would be suboptimal. Note that ARoW is better than CoW even if the differences are not large.

\begin{table*}[ht]
    \caption{\textbf{Comparison of ARoW to other adversarial algorithms with extra data on CIFAR10.}}
    \centering
    \begin{tabular}{c|c|ccccc}
    \hline
    \textbf{Model} & \textbf{Extra data} & \textbf{Method}  & \textbf{Stand}  & $\textbf{PGD}^{20}$ & \textbf{AutoAttack} \\
    \hline
    \hline
    \multirow{4}{*}{WRN-28-10} & \multirow{4}{*}{80M-TI(500K)}
         & \citet{carmon2019unlabeled}  & 89.69 & 62.95 & 59.58 & \\
    &    & \citet{rebuffi2021data}      & 90.47 & 63.06 & 60.57 & \\
    &    & HAT                      & 91.50 & 63.42 & \textbf{60.96} & \\
    &    & ARoW                     & \textbf{91.57} & \textbf{64.64} & 60.91 & \\
    \cline{2-7}
    \hline
    \multirow{8}{*}{ResNet-18} & \multirow{4}{*}{80M-TI(500K)}
    & \citet{carmon2019unlabeled}  &  87.07 & 56.86 & 53.16 &   \\
    & & \citet{rebuffi2021data}    & 87.67 & 59.20 & 56.24  \\
    & & HAT                     & 88.98 & 59.29 & 56.40   \\
    & & ARoW                  &  \textbf{89.04} & \textbf{60.38} & \textbf{56.54} &   \\
    \cline{2-7}
    & \multirow{4}{*}{DDPM(1M)}
    & \citet{carmon2019unlabeled} & 82.61 & 56.16 & 52.82 & \\
    & & \citet{rebuffi2021data}  & 83.46 & 56.89 & 54.22 \\
    & & HAT                   & 86.09 & 58.61 & 55.44 \\
    & & ARoW                & \textbf{86.72} & \textbf{59.50} & \textbf{55.57} &\\
    \hline
    \end{tabular}
    \label{table-extra}
    \vskip -0.1in
\end{table*}
\section{Experiments}
\label{sec4}

In this section, we investigate ARoW algorithm in view of robustness and generalization by analyzing the four benchmark data sets - CIFAR10, CIFAR100 \citep{krizhevsky09learningmultiple} , F-MINST \citep{xiao2017fahsion} and SVHN dataset \citep{netzer2011svhn}.
In particular, we show that ARoW is superior to existing algorithms including TRADES \citep{zhang2019theoretically}, HAT \citep{rade2022reducing} and MART \citep{wang2020improving}
as well as PGD-AT \citep{madry2018towards} and GAIR-AT \citep{zhang2021geometry}
to achieve state-of-art performances.
WideResNet-34-10 (WRN-34-10) \citep{zagoruyko2016wide} 
and ResNet-18 \citep{he2016deep} are used for CIFAR10 and CIFAR100 while 
ResNet-18 \citep{he2016deep} is used for F-MNIST and SVHN.
We apply SWA for mitigating robust overfitting \cite{chen2021robust} on CIFAR10 and CIFAR100. The effect of SWA are described in Section \ref{ablation}.
Experimental details are presented in Appendix \ref{appB}.
The code is available at \href{https://github.com/dyoony/ARoW}{https://github.com/dyoony/ARoW}.

\subsection{Comparison of ARoW to Other Competitors}
\label{sec4_1}
We compare ARoW to other competitors
TRADES \citep{zhang2019theoretically}, HAT \citep{rade2022reducing}, MART \citep{wang2020improving} explained in Section \ref{al_reg},
PGD-AT \citep{madry2018towards} and GAIR-AT \citep{zhang2021geometry} which are the algorithms  minimizing the robust risk directly.

Table \ref{table:compare} shows that 
ARoW outperforms the other competitors for various data sets and architectures
in terms of the standard accuracy and the robust accuracy against to AutoAttack \citep{croce2020reliable}.
GAIR-AT is, however, better for PGD$^{20}$ attack than ARoW.
This would be due to the gradient masking \citep{papernot2016science, papernot2017practical} as described in Appendix \ref{gradient-masking}.
The selected values of the hyper-parameters for the other algorithms are listed in Appendix B.2.

To investigate whether ARoW dominates its competitors uniformly with respect to the regularization 
parameter $\lambda,$ we compare the trade-off between the standard and robust accuracies of ARoW
and other regularization algorithms when $\lambda$ varies.
Figure \ref{fig1} draws the plots of the standard accuracies 
in the $x$-axis and the robust accuracies in the $y$-axis obtained by the corresponding algorithms with
various values of $\lambda.$ For this experiment, we use CIFAR10 and WideResNet-34-10 (WRN-34-10) architecture.

The trade-off between the standard and robust accuracies is well observed (i.e. a larger regularization parameter $\lambda$ yields lower standard accuracy but higher robust accuracy).
Moreover, we can clearly see that ARoW uniformly dominates TRADES and HAT
(and MART) regardless of the choice of the regularization parameter and the methods for adversarial attack. 
Additional results for the trade-off are provided in Appendix \ref{ablation:lambda}. 

\begin{table*}[ht]
    \caption{\textbf{Comparison of MART and ARoW}. We compare the robustness of MART \citep{wang2020improving} and ARoW against the four attacks used  in AutoAttack on CIFAR10. The results are based on WRN-34-10. We set $\lambda=3$ and ARoW, respectively.}
    \centering
    \begin{tabular}{c|c|cccc}
    \hline
    \textbf{Method} & \textbf{Standard}  & \textbf{APGD} & \textbf{APGD-DLR}  & \textbf{FAB} & \textbf{SQUARE} \\
    \hline
    \multirow{1}{*}{\text{MART}}
    & 83.17 & 56.30 & 51.87 & 51.28  & 58.59 \\
    \multirow{1}{*}{\text{ARoW}}
    & \textbf{87.65} & \textbf{56.37} & \textbf{55.17} & \textbf{56.69} & \textbf{63.50} \\
    \hline
    \end{tabular}
    \label{compare-mart}
\end{table*}

 \begin{table*}[ht]
    \caption{\textbf{Role of the new regularization term in ARoW.} \# $\textbf{Rob}_{\text{TRADES}}$ and \# $\textbf{Rob}_{\text{ARoW}}$ represent the number of samples which are robust to TRADES and ARoW, respectively. \textbf{Diff.} and \textbf{Rate of Impro.} denote (\# $\textbf{Rob}_{\text{ARoW}}$ - \# $\textbf{Rob}_{\text{TRADES}}$) and \textbf{Diff.} / \# $\textbf{Rob}_{\text{TRADES}}$). The $\text{PGD}^{10}$ is used for evaluating the robustness.}
    \centering
    \begin{tabular}{r|cccc}
    \hline
    Sample's Robustness & \# $ \textbf{Rob}_{\text{TRADES}}$ & \# $\textbf{Rob}_{\text{ARoW}}$ & \textbf{Diff.} & \textbf{Rate of Impro. (\%) } \\
    \hline
    \hline
    Least Robust  & 317  & 357  & 40 & 12.62 \\
    Less Robust   & 945  & 1008 & 63 & 6.67 \\
    Robust             & 969  & 1027 & 58 & 5.99 \\
    Highly Robust      & 3524 & 3529 & 5 & 0.142\\
    \hline
    \end{tabular}
    \label{effect-regularization-trades}
    \vskip -0.1in
\end{table*}
 \begin{table*}
    \caption{\textbf{Comparing of ARoW to MART on sample's robustness.} \# $\textbf{Rob}_{\text{MART}}$ and \# $\textbf{Rob}_{\text{ARoW}}$ represent the number of samples which are robust to MART and ARoW, respectively. \textbf{Diff.} and \textbf{Rate of Impro.} denote (\# $\textbf{Rob}_{\text{ARoW}}$ - \# $\textbf{Rob}_{\text{MART}}$) and \textbf{Diff.} / \# $\textbf{Rob}_{\text{MART}}$). The autoattack is used for evaluating the robustness because of gradient masking.}
    \centering
    \begin{tabular}{r|cccc}
    \hline
    Sample's Robustness & \# $ \textbf{Rob}_{\text{MART}}$ & \# $\textbf{Rob}_{\text{ARoW}}$ & \textbf{Diff.} & \textbf{Rate of Impro. (\%) } \\
    \hline
    \hline
    Least Robust  & 150  & 148 & -2 & -1.3 \\
    Less Robust   & 729  & 865 & \textbf{136} & 18.65 \\
    Robust             & 962  & 984 & 22 & 2.29 \\
    Highly Robust      & 3515 & 3530 & 15 & 0.04\\
    \hline
    \end{tabular}
    \label{effect-regularization-mart}
    \vskip -0.1in
\end{table*}
\paragraph{Experimental comparison to MART}
We observe that MART has relatively high robust accuracies against PGD-based attacks than other attacks. Table \ref{compare-mart} shows the robust accuracies against four attacks included in AutoAttack \cite{croce2020reliable}.
Table \ref{compare-mart} shows that MART has good performance for APGD, but not for APGD-DLR, FAB and SQUARE. 
This result indicates that the gradient masking occurs for MART. That is,
PGD does not find good adversarial examples, but the other attacks easily find adversarial examples.
See Appendix \ref{gradient-masking} for details about gradient masking.

\subsection{Analysis with extra data}
\label{sec4_3}
For improving performance on CIFAR10, \cite{carmon2019unlabeled} and \cite{rebuffi2021data} use extra unlabeled data sets with TRADES. \cite{carmon2019unlabeled} uses an additional subset of 500K extracted from 80 Million Tiny Images (80M-TI) and \cite{rebuffi2021data} uses a data set of 1M synthetic samples generated by a denoising diffusion probabilistic model (DDPM) \citep{ho2020denoising} along with the SiLU activation function and Exponential Moving Average (EMA).
Further, \cite{rade2022reducing} shows that HAT achieves the SOTA performance for these extra data.

Table \ref{table-extra} compares ARoW with the exiting algorithms for extra data, which shows that ARoW achieves the state-of-the-art performance when extra data are available even though the margins compared to HAT
are not significant. 
Note that ARoW has advantages other than the high robust accuracies.
For example, ARoW
is easy to implement compared to HAT since HAT requires a pre-trained model.
Moreover, as we will see in Table \ref{table5}, ARoW improves the fairness compared to TRADES while HAT improves the performance with sacrificing fairness.

\begin{table*}[ht]
    \caption{\textbf{Modifications of TRADES and ARoW.} We use CIFAR10 dataset and ResNet-18 architecture. More details of hyerparameters are provided in Appendix \ref{app_combine}.}
    \centering
    \begin{tabular}{c|ccc|ccc}
    \toprule
    \multirow{2}{*}{\textbf{Method}} & \multicolumn{3}{c|}{AWP}   & \multicolumn{3}{c}{FAT}  \\
    \cline{2-7}
    & \textbf{Standard}  & \textbf{PGD}$^{20}$ & \textbf{AutoAttack} & \textbf{Standard}  & \textbf{PGD}$^{20}$ & \textbf{AutoAttack} \\
    \hline
    \hline
    TRADES                & 82.10(0.09) & 53.56(0.18) & 49.56(0.23) & 82.96(0.08) & 52.76(0.22) & 49.83(0.28) \\
    ARoW                & \textbf{84.98}(0.11) &\textbf{55.55}(0.15) & \textbf{50.64}(0.18) & \textbf{86.21}(0.06) & \textbf{53.37}(0.20) & \textbf{50.07}(0.17) \\
    \bottomrule
    \end{tabular}
    \label{combination}
    \vskip -0.1in
\end{table*}
\begin{table}[ht]
    \caption{\textbf{Class-wise accuracy disparity for CIFAR10}. We report the accuracy (ACC), the worst-class accuracy (WC-Acc) and the standard deviation of class-wise accuracies (SD) for each method.}
    \centering
    \scalebox{0.8}{
    \begin{tabular}{c|ccc|ccc}
    \hline
    \multirow{2}{*}{\textbf{Method}}  & \multicolumn{3}{c|}{\textbf{Standard}} & \multicolumn{3}{c}{$\textbf{PGD}^{10}$} \\
    \cline{2-7}
    & \textbf{Acc} &  \textbf{WC-Acc} & \textbf{SD}  & \textbf{Acc} & \textbf{WC-Acc} & \textbf{SD} \\
    \hline
    \hline
    TRADES & 85.69 & 67.10 & 9.27 & 57.38 & 27.10 & 16.97  \\
    HAT   & 86.74 & 65.40 & 11.12 & 57.92 & 24.20  & 18.26 \\
    ARoW                & \textbf{87.58} & \textbf{74.51} & \textbf{7.11} & \textbf{59.32} & \textbf{31.05} & \textbf{15.67}  \\
    \hline
    \end{tabular}
    }
    \label{table5}
    \vskip -0.1in
\end{table}
\subsection{Ablation studies}
\label{ablation}
We study the following three issues - (i) the effect of label smoothing to ARoW, 
(ii) the effect of stochastic weighted averaging \cite{izmailov2018averaging},
(iii) the role of the new regularization term in ARoW to improve robustness
and (iV) modifications of ARoW by applying tools
which improve existing adversarial training algorithms.

\subsubsection{Effect of Stochastic Weighting Averaging}
The table presented in Appendix \ref{swa} demonstrates a significant improvement in the performance of ARoW when SWA is applied.
We believe this improvement is primarily due to the adaptive weighted regularization effect of SWA. 
Ensembling methods can improve the performance of models by diversifying them \citep{jantre2022sequential} and SWA can be considered one of the ensembling methods \citep{izmailov2018averaging}. In the case of ARoW, the adaptively weighted regularization term $(1-p_{\bm{\theta}}(y|\widehat{\bm{x}}^{\text{pgd}}))$ diversifies the models for averaging weights, which significantly improves the performance of ARoW.

\subsubsection{Effect of Label Smoothing}

Table \ref{effect-ls} indicates that 
 label smoothing is helpful not only for ARoW but also for TRADES. This would be partly because the regularization terms in ARoW and TRADES
depend on the conditional class probabilities and it is well known that label smoothing is helpful for the calibration of the conditional class probabilities \citep{pereyra2017regularzing}. 

Moreover, the results in Table \ref{effect-ls} imply that label smoothing is not a main reason for
ARoW to outperform TRADES. Even without label smoothing, ARoW is still superior to TRADES (even with the
label smoothing). 
Appendix \ref{ablation:ls} presents the results of an additional experiment to assess
the effect of label smoothing to the performance.

\subsubsection{Role of the new regularization term in ARoW}

The regularization term of ARoW puts
more regularization to less robust samples, and thus
we expect that ARoW improves the robustness of less robust samples much. 
To confirm this conjecture, we do a small experiment. 

First, we divide the test data into four groups - least robust, less robust, robust and highly robust
according to the values of
 $p_{\bm{\theta}_{\text{PGD}}}(y_i|\widehat{\bm{x}}_i^{\text{pgd}})$ ($<0.3,$ $0.3 \sim 0.5$,  $0.5 \sim 0.7$ and $>0.7$), where $\bm{\theta}_{\text{PGD}}$ is the parameter learned by
 PGD-AT \citep{madry2018towards}\footnote{We use PGD-AT instead of a standard non-robust
 training algorithm since all samples become least robust for a non-robust prediction model.}.
Then, for each group, we check how many samples become robust for ARoW as well as  TRADES, MART whose results are presented in Tables \ref{effect-regularization-trades} and \ref{effect-regularization-mart}.
Note that ARoW improves the robustness of initially less robust samples  compared with TRADES and MART, respectively.
We believe that this improvement is due to the regularization term in ARoW that enforces more regularization on less robust samples.

\subsubsection{Modifications of ARoW}

There are many useful tools which improve existing adversarial training algorithms.
Examples are Adversarial Weight Perturbation (AWP) \citep{wu2020adversarial} and Friendly Adversarial Training (FAT) \citep{zhang2020attacks}. 
AWP is a tool to find a flat minimum of the objective function
and FAT uses early-stopped PGD when generating adversarial examples in the training phase.
Details about AWP and FAT are given in Appendix \ref{app_combine}.

We investigate how ARoW performs  when it is modified by such a tool.
We consider the two  modifications of ARoW - ARoW-AWP and ARoW-FAT, where
ARoW-AWP searches a flat minimum of the ARoW objective function and
ARoW-FAT uses early-stopped PGD in the training phase of ARoW.

Table \ref{combination} compares ARoW-AWP and ARoW-FAT to TRDAES-AWP and TRADES-FAT.
Both of AWP and FAT are helpful for ARoW and TRADES but
ARoW still outperforms TRADES with large margins even after modified by AWP or FAT.

\subsection{Improved Fairness}
\label{fairness}
 
\citet{xu2021to} reports that TRADES \citep{zhang2019theoretically} increases
the variation of the per-class accuracies (accuracy in each class) which
is not desirable in view of fairness. In turn, \citet{xu2021to} proposes the Fair-Robust-Learning (FRL) algorithm to alleviate this problem. Even if
fairness becomes improved, the standard and robust accuracies of FRL 
are worse than TRADES.

In contrast, Table \ref{table5} shows that ARoW improves the fairness
as well as the standard and robust accuracies compared to TRADES.
This desirable property of ARoW can be partly understood as follows.
The main idea of ARoW is to impose more robust regularization
to less robust samples.
In turn, samples in less accurate classes tend to be more vulnerable to adversarial attacks. Thus, ARoW improves the robustness of samples in less accurate classes which results in improved robustness as well as improved generalization for such less accurate classes.
The class-wise accuracies are presented in Appendix \ref{appE}.

\section{Conclusion and Future Works}

In this paper, we derived an upper bound of the robust risk and developed a new algorithm for adversarial training called ARoW which minimizes a surrogate version of the derived upper bound.
A novel feature of ARoW is to impose more regularization on less robust samples than TRADES.
The results of numerical experiments shows that ARoW improves the standard and robust accuracies
simultaneously to achieve state-of-the-art performances. In addition, ARoW enhances
the fairness of the prediction model without hampering the accuracies.

\begin{table}
    \small
    \caption{\textbf{Comparison of TRADES and ARoW with/without label smoothing.} With WRN-34-10 architecture and CIFAR10 dataset, we use $\lambda=6$ for TRADES while use $\lambda=3$ for ARoW.}
    \centering
    \begin{tabular}{c|ccc}
    \hline
    \textbf{Method} & \textbf{Standard}  & $\textbf{PGD}^{20}$ & \textbf{AutoAttack}  \\
    \hline
    \hline
    TRADES w/o-LS   & 85.86(0.09) & 56.79(0.08) & 54.31(0.08) \\
    TRADES w/-LS   & 86.33(0.08) & 57.45(0.02) & 54.66(0.08) \\
    ARoW w/o-LS    & 86.83(0.16) & 58.34(0.09) & 55.01(0.10) \\
    ARoW w/-LS    & \textbf{87.65}(0.02) & \textbf{58.38}(0.09) & \textbf{55.15}(0.14) \\
    \hline
    \end{tabular}
    \label{effect-ls}
    \vskip -0.1in
\end{table}

When we developed a computable surrogate of the upper bound of the robust risk in Theorem 1,
we replaced $\mathbbm{1}( F_{\bm{\theta}}(\mathbf{X})
\neq F_{\bm{\theta}}(z(\mathbf{X})))
)$ by $\operatorname{KL}(\mathbf{p}_{\bm{\theta}}(\cdot|\mathbf{X}) || \mathbf{p}_{\bm{\theta}}(\cdot|\widehat{\mathbf{X}}^{\text{pgd}})).$
The KL divergence, however, is not an upper bound of the 0-1 loss and thus our
surrogate is not an upper bound of the robust risk. 
We employed the KL divergence surrogate to make the objective function of ARoW be similar
to that of TRADES. It would be worth pursuing to
devise an alternative surrogate for the 0-1 loss to reduce the gap between the theory and algorithm.

We have seen in Section \ref{fairness} that ARoW improves
fairness as well as accuracies.
The advantage of ARoW in view of fairness
is an unexpected by-product, and it would be interesting to
develop a more principled way of enhancing the fairness further without
hampering the accuracy.

\paragraph{Acknowledgement}
This work was supported by National Research Foundation of Korea (NRF) grant funded by the Korea government (MSIT) (No. 2020R1A2C3A0100355014), Institute of Information \& communications Technology Planning \& Evaluation (IITP) grant funded by the Korea government (MSIT) 
[NO.2022-0-00184, Development and Study of AI Technologies to Inexpensively Conform to Evolving Policy on Ethics].

\bibliography{icml2023_conference}
\bibliographystyle{icml2023}

\newpage
\appendix
\onecolumn

\section{Proof of Theorem \ref{theorem1}} \label{appA}
\renewcommand{\theequation}{A.\arabic{equation}}
In this section, 
we prove Theorem \ref{theorem1}.
The following lemma provides the key inequality for the proof.

\begin{lemma}
\label{lemma2}
For a given score function $f_{\bm{\theta}},$let $z(\cdot)$ be an any measurable mapping from $\mathcal{X}$ to $\mathcal{X}$ satisfying
\begin{equation*}
    z(\bm{x}) \in \underset{\bm{x}' \in \mathcal{B}_{p}(\bm{x}, \varepsilon)}{\operatorname{argmax}} \mathbbm{1} \left( F_{\bm{\theta}}(\bm{x}) \neq F_{\bm{\theta}}(\bm{x}')\right)
\end{equation*}
for every $\bm{x} \in \mathcal{X}$. Then, we have
\begin{align}
    \begin{split}
    \mathbbm{1}\left\{\exists \bm{x}' \in \mathcal{B}_p(\bm{x}, \varepsilon) : F_{\bm{\theta}}(\bm{x})  \neq F_{\bm{\theta}}(\bm{x}'), F_{\bm{\theta}}(\bm{x}) = Y \right\} \\
    \leq \mathbbm{1}\left\{F_{\bm{\theta}}(\bm{x}) \neq F_{\bm{\theta}}(z(\bm{x})), Y \neq F_{\bm{\theta}}(z(\bm{x}))\right\} \label{ineq_insung_multi}
    \end{split}
\end{align}
\end{lemma}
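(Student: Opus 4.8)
The plan is to prove the inequality pointwise. Since both sides are $\{0,1\}$-valued indicator functions, it suffices to verify the implication that whenever the event defining the left-hand side holds, the event defining the right-hand side also holds. There is no need to integrate or to invoke measurability beyond what guarantees $z(\cdot)$ exists; the entire argument is a logical one for each fixed $\bm{x}$ and $Y$.

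First I would assume the left indicator equals $1$, which means there exists some $\bm{x}' \in \mathcal{B}_p(\bm{x}, \varepsilon)$ with $F_{\bm{\theta}}(\bm{x}) \neq F_{\bm{\theta}}(\bm{x}')$ and, simultaneously, $F_{\bm{\theta}}(\bm{x}) = Y$. The key step is then to exploit the defining property of $z$. Because $\bm{x}'$ lies in $\mathcal{B}_p(\bm{x}, \varepsilon)$ and satisfies $\mathbbm{1}\left(F_{\bm{\theta}}(\bm{x}) \neq F_{\bm{\theta}}(\bm{x}')\right) = 1$, the maximum of $\mathbbm{1}\left(F_{\bm{\theta}}(\bm{x}) \neq F_{\bm{\theta}}(\cdot)\right)$ over the ball must equal $1$; and since $z(\bm{x})$ is by definition an argmax of exactly this quantity, $z(\bm{x})$ must attain that maximal value, giving $F_{\bm{\theta}}(\bm{x}) \neq F_{\bm{\theta}}(z(\bm{x}))$.

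Next I would combine this conclusion with the second component of the left-hand event. Substituting $F_{\bm{\theta}}(\bm{x}) = Y$ into the relation $F_{\bm{\theta}}(\bm{x}) \neq F_{\bm{\theta}}(z(\bm{x}))$ immediately yields $Y \neq F_{\bm{\theta}}(z(\bm{x}))$. Hence both conditions defining the right-hand event, namely $F_{\bm{\theta}}(\bm{x}) \neq F_{\bm{\theta}}(z(\bm{x}))$ and $Y \neq F_{\bm{\theta}}(z(\bm{x}))$, are satisfied, so the right indicator equals $1$ and the desired inequality holds for this $\bm{x}$.

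There is essentially no serious obstacle here; the only subtlety worth handling carefully is the argmax definition of $z$. The point to emphasize is that the function being maximized is itself an indicator taking values in $\{0,1\}$, so the mere existence of one witness $\bm{x}'$ forcing a label flip pins the maximum at $1$ and therefore forces $z(\bm{x})$ to realize a label flip as well. Once that observation is made explicit, the remainder is an immediate substitution using $F_{\bm{\theta}}(\bm{x}) = Y$, and the claim follows.
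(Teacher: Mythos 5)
Your proof is correct and rests on the same key fact as the paper's: because the maximized quantity is a $\{0,1\}$-valued indicator, a single witness $\bm{x}' \in \mathcal{B}_p(\bm{x},\varepsilon)$ with $F_{\bm{\theta}}(\bm{x}) \neq F_{\bm{\theta}}(\bm{x}')$ pins the maximum at $1$, so the argmax point $z(\bm{x})$ must itself satisfy $F_{\bm{\theta}}(\bm{x}) \neq F_{\bm{\theta}}(z(\bm{x}))$, and substituting $F_{\bm{\theta}}(\bm{x}) = Y$ then yields $Y \neq F_{\bm{\theta}}(z(\bm{x}))$. The only difference is organizational: you prove the direct implication (left event implies right event) in a single chain, whereas the paper argues the contrapositive, splitting into the two cases $F_{\bm{\theta}}(\bm{x}) = F_{\bm{\theta}}(z(\bm{x}))$ and $Y = F_{\bm{\theta}}(z(\bm{x}))$ and invoking a contradiction in the second case; your version subsumes both cases and is, if anything, the cleaner presentation of the same argument.
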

\begin{proof}
The inequality holds obviously if 
$\mathbbm{1}\left\{F_{\bm{\theta}}(\bm{x}) \neq F_{\bm{\theta}}(z(\bm{x})), Y \neq F_{\bm{\theta}}(z(\bm{x}))\right\} = 1$.
Hence, it suffices to show that $\mathbbm{1}\left\{\exists \bm{x}' \in \mathcal{B}_p(\bm{x}, \varepsilon) : F_{\bm{\theta}}(\bm{x})  \neq F_{\bm{\theta}}(\bm{x}'), F_{\bm{\theta}}(\bm{x}) = Y \right\}=0$
when either $F_{\bm{\theta}}(\bm{x}) = F_{\bm{\theta}}(z(\bm{x}))$ or $Y = F_{\bm{\theta}}(z(\bm{x})).$

Suppose $F_{\bm{\theta}}(\bm{x}) = F_{\bm{\theta}}(z(\bm{x})).$
It trivially holds that
$\mathbbm{1} \left( F_{\bm{\theta}}(\bm{x}) \neq F_{\bm{\theta}}(z(\bm{x}))\right) 
\leq \mathbbm{1} \left( F_{\bm{\theta}}(\bm{x}) \neq F_{\bm{\theta}}(\bm{x}')\right)$
for every $\bm{x}' \in \mathcal{X}$ since $\mathbbm{1} \left( F_{\bm{\theta}}(\bm{x}) \neq F_{\bm{\theta}}(z(\bm{x}))\right)=0$ and the equality holds if and only if $F_{\bm{\theta}}(z(\bm{x})) = F_{\bm{\theta}}(\bm{x}')$.
By the definition of $z(\bm{x})$,
the left side of (\ref{ineq_insung_multi}) is $0$ since 
$\mathbbm{1}\left\{\exists \bm{x}' \in \mathcal{B}_p(\bm{x}, \varepsilon) : F_{\bm{\theta}}(\bm{x})  \neq F_{\bm{\theta}}(\bm{x}')\right\} = 0,$
and hence the inequality holds.

Suppose $Y = F_{\bm{\theta}}(z(\bm{x})).$ 
If $F_{\bm{\theta}}(\bm{x}) = Y$ and there exists $\bm{x}'$ in $\mathcal{B}_p(\bm{x}, \varepsilon)$ such that
$F_{\bm{\theta}}(\bm{x}')  \neq F_{\bm{\theta}}(\bm{x})$,
then we have $F_{\bm{\theta}}(\bm{x}') \neq Y = F_{\bm{\theta}}(\bm{x}) = F_{\bm{\theta}}(z(\bm{x}))$. In turn, it implies
$\mathbbm{1} \left( F_{\bm{\theta}}(\bm{x}) \neq F_{\bm{\theta}}(z(\bm{x}))\right) 
< \mathbbm{1} \left( F_{\bm{\theta}}(\bm{x}) \neq F_{\bm{\theta}}(\bm{x}')\right),$ 
which is a contradiction to the definition of $z(\bm{x})$. 
Hence, the left side of (\ref{ineq_insung_multi}) should be $0$, and we complete the proof
of the inequality.
\end{proof}

\multi*
\begin{proof}

Note that $\mathcal{R}_{\text{rob}}({\bm{\theta}}) = \mathcal{R}_{\text{nat}}({\bm{\theta}}) + \mathcal{R}_{\text{bdy}}({\bm{\theta}})$ where
$\mathcal{R}_{\text{nat}}(\bm{\theta}) = \mathbb{E}_{(\mathbf{X}, Y)}\mathbbm{1}\left\{ F_{\bm{\theta}}(\mathbf{X}) \neq Y \right\}$ and 
$\mathcal{R}_{\text{bdy}}(\bm{\theta}) = \mathbb{E}_{(\mathbf{X}, Y)}\mathbbm{1}\left\{\exists \mathbf{X}' \in \mathcal{B}_p(\mathbf{X}, \varepsilon) :F_{\bm{\theta}}(\mathbf{X})\neq F_{\bm{\theta}}(\mathbf{X}')   ,F_{\bm{\theta}}(\mathbf{X}) = Y   \right\}$.

Since
\begin{align*} 
\mathcal{R}_{\text{bdy}}({\bm{\theta}}) &= \mathbb{E}_{(\mathbf{X}, Y)}\mathbbm{1}\left\{\exists \mathbf{X}' \in \mathcal{B}_p(\mathbf{X}, \varepsilon) : F_{\bm{\theta}}(\mathbf{X}) \neq F_{\bm{\theta}}(\mathbf{X}'), F_{\bm{\theta}}(\mathbf{X})=Y \right\} \\ 
& \leq \mathbb{E}_{(\mathbf{X}, Y)}\mathbbm{1}\left\{F_{\bm{\theta}}(\mathbf{X}) \neq F_{\bm{\theta}}(z(\mathbf{X})), Y \neq F_{\bm{\theta}}(z(\mathbf{X}))\right\} (\because \text{\;Lemma\;} \ref{lemma2})\\
& = \mathbb{E}_{(\mathbf{X}, Y)} \mathbbm{1} \left\{ F_{\bm{\theta}}(\mathbf{X}) \neq F_{\bm{\theta}}(z(\mathbf{X})) \right\} \mathbbm{1}\left\{ Y \neq F_{\bm{\theta}}(z(\mathbf{X})) \right\} \\ 
& \leq  \mathbb{E}_{(\mathbf{X}, Y)} \mathbbm{1} \left\{ F_{\bm{\theta}}(\mathbf{X}) \neq F_{\bm{\theta}}(z(\mathbf{X})) \right\} \mathbbm{1} \left\{ p_{\bm{\theta}}(Y | z(\mathbf{X})) < 1/2 \right\},
\end{align*}
the inequality (\ref{arow:ub}) holds.
\end{proof}

\begin{theorem}
\label{thm:cow}
For a given score function $f_{\bm{\theta}}$, let $z(\cdot)$ be an any measurable mapping from $\mathcal{X}$ to $\mathcal{X}$ satisfying
\begin{equation*}
    z(\bm{x}) \in \underset{\bm{x}' \in \mathcal{B}_{p}(\bm{x}, \varepsilon)}{\operatorname{argmax}} \mathbbm{1} \left( F_{\bm{\theta}}(\bm{x}) \neq F_{\bm{\theta}}(\bm{x}')\right).
\end{equation*}
for every $\bm{x} \in \mathcal{X}$. Then, we have
\begin{align}
\label{cow:ub}
    \mathcal{R}_{\text{rob}}(\bm{\theta}) & \leq \mathbb{E}_{(\mathbf{X},Y)} \mathbbm{1}(Y \neq F_{\bm{\theta}}(\mathbf{X})) \nonumber \\
    & + 2 \mathbb{E}_{(\mathbf{X}, Y)}{\mathbbm{1}(F_{\bm{\theta}}(\mathbf{X}) \neq F_{\bm{\theta}}(z(\mathbf{X})))} \cdot p_{\bm{\theta}}(Y|\mathbf{X})
\end{align}
\begin{proof}

Note that $\mathcal{R}_{\text{rob}}({\bm{\theta}}) = \mathcal{R}_{\text{nat}}({\bm{\theta}}) + \mathcal{R}_{\text{bdy}}({\bm{\theta}})$ where
$\mathcal{R}_{\text{nat}}(\bm{\theta}) = \mathbb{E}_{(\mathbf{X}, Y)}\mathbbm{1}\left\{ F_{\bm{\theta}}(\mathbf{X}) \neq Y \right\}$ and 
$\mathcal{R}_{\text{bdy}}(\bm{\theta}) = \mathbb{E}_{(\mathbf{X}, Y)}\mathbbm{1}\left\{\exists \mathbf{X}' \in \mathcal{B}_p(\mathbf{X}, \varepsilon) :F_{\bm{\theta}}(\mathbf{X})\neq F_{\bm{\theta}}(\mathbf{X}')   ,F_{\bm{\theta}}(\mathbf{X}) = Y   \right\}$.

Since
\begin{align*} 
\mathcal{R}_{\text{bdy}}({\bm{\theta}}) &= \mathbb{E}_{(\mathbf{X}, Y)}\mathbbm{1}\left\{\exists \mathbf{X}' \in \mathcal{B}_p(\mathbf{X}, \varepsilon) : F_{\bm{\theta}}(\mathbf{X}) \neq F_{\bm{\theta}}(\mathbf{X}'), F_{\bm{\theta}}(\mathbf{X})=Y \right\} \\  
& \leq \mathbb{E}_{(\mathbf{X}, Y)} \mathbbm{1} \left\{ F_{\bm{\theta}}(\mathbf{X}) \neq F_{\bm{\theta}}(z(\mathbf{X})) \right\} \mathbbm{1}\left\{ Y = F_{\bm{\theta}}(\mathbf{X}) \right\} \\ 
& \leq  \mathbb{E}_{(\mathbf{X}, Y)} \mathbbm{1} \left\{ F_{\bm{\theta}}(\mathbf{X}) \neq F_{\bm{\theta}}(z(\mathbf{X})) \right\} \mathbbm{1}\{p_{\bm{\theta}}(Y|\mathbf{X}) > 1/2 \} \\
& \leq 2 \mathbb{E}_{(\mathbf{X}, Y)} \mathbbm{1} \left\{ F_{\bm{\theta}}(\mathbf{X}) \neq F_{\bm{\theta}}(z(\mathbf{X})) \right\}\cdot p_{\bm{\theta}}(Y|\mathbf{X}), 
\end{align*}
the inequality (\ref{cow:ub}) holds.
\end{proof}

\end{theorem}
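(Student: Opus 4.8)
The plan is to follow the same two-term decomposition that drives Theorem~\ref{theorem1}. By the TRADES identity $\mathcal{R}_{\text{rob}}(\bm{\theta}) = \mathcal{R}_{\text{nat}}(\bm{\theta}) + \mathcal{R}_{\text{bdy}}(\bm{\theta})$, the first summand $\mathcal{R}_{\text{nat}}(\bm{\theta}) = \mathbb{E}_{(\mathbf{X},Y)}\mathbbm{1}(Y \neq F_{\bm{\theta}}(\mathbf{X}))$ already matches the first term of the claimed bound exactly. Hence the whole argument reduces to producing the majorization
\[
\mathcal{R}_{\text{bdy}}(\bm{\theta}) \leq 2\,\mathbb{E}_{(\mathbf{X},Y)}\mathbbm{1}(F_{\bm{\theta}}(\mathbf{X}) \neq F_{\bm{\theta}}(z(\mathbf{X}))) \cdot p_{\bm{\theta}}(Y\mid\mathbf{X}),
\]
and everything will be done pointwise inside the expectation.

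First I would rewrite the boundary indicator. By the definition of $z$ as a maximizer of $\mathbbm{1}(F_{\bm{\theta}}(\bm{x}) \neq F_{\bm{\theta}}(\cdot))$ over $\mathcal{B}_p(\bm{x},\varepsilon)$, the existence of some perturbation $\bm{x}'$ with $F_{\bm{\theta}}(\bm{x}) \neq F_{\bm{\theta}}(\bm{x}')$ is equivalent to $F_{\bm{\theta}}(\bm{x}) \neq F_{\bm{\theta}}(z(\bm{x}))$: the maximizer attains the value $1$ iff any point does. Consequently the summand defining $\mathcal{R}_{\text{bdy}}$ factors as $\mathbbm{1}(F_{\bm{\theta}}(\mathbf{X}) \neq F_{\bm{\theta}}(z(\mathbf{X})))\,\mathbbm{1}(F_{\bm{\theta}}(\mathbf{X}) = Y)$, since the correctness event does not involve the perturbation. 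This step is cleaner than the one in Lemma~\ref{lemma2}, which I do not need here, because I keep the event $\{F_{\bm{\theta}}(\mathbf{X})=Y\}$ intact rather than transferring it onto $z(\mathbf{X})$.

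It then remains to control the correctness indicator $\mathbbm{1}(F_{\bm{\theta}}(\mathbf{X}) = Y)$ by a multiple of $p_{\bm{\theta}}(Y\mid\mathbf{X})$. Since $F_{\bm{\theta}}(\mathbf{X}) = \operatorname{argmax}_c\,p_{\bm{\theta}}(c\mid\mathbf{X})$, on the event $F_{\bm{\theta}}(\mathbf{X}) = Y$ the quantity $p_{\bm{\theta}}(Y\mid\mathbf{X})$ is the top class probability, which I would bound below by $1/2$ to obtain $\mathbbm{1}(F_{\bm{\theta}}(\mathbf{X}) = Y) \leq \mathbbm{1}(p_{\bm{\theta}}(Y\mid\mathbf{X}) > 1/2)$. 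Finally the elementary linear majorant $\mathbbm{1}(t > 1/2) \leq 2t$ for $t \in [0,1]$, applied at $t = p_{\bm{\theta}}(Y\mid\mathbf{X})$, supplies the factor $2\,p_{\bm{\theta}}(Y\mid\mathbf{X})$ and closes the chain after taking expectations.

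The delicate step is the passage $\mathbbm{1}(F_{\bm{\theta}}(\mathbf{X}) = Y) \leq \mathbbm{1}(p_{\bm{\theta}}(Y\mid\mathbf{X}) > 1/2)$: being the argmax class only forces $p_{\bm{\theta}}(Y\mid\mathbf{X}) \geq 1/C$, so the ``top probability exceeds $1/2$'' reading is exact for binary problems and is the place where the bound becomes genuinely loose for $C > 2$ --- the same binary-flavored slack already present in Theorem~\ref{theorem1}. I expect this to be the only nontrivial inequality; the factorization and the $2t$ majorant are routine. Contrasting with Theorem~\ref{theorem1}, the point of this variant is that retaining $\{F_{\bm{\theta}}(\mathbf{X})=Y\}$ rather than routing the correctness event through $z(\mathbf{X})$ produces a weight proportional to $p_{\bm{\theta}}(Y\mid\mathbf{X})$, the confidence at the \emph{clean} input, instead of a $1-p_{\bm{\theta}}(\cdot\mid z(\mathbf{X}))$-type weight, which is exactly the regularization signal motivating the CoW algorithm.
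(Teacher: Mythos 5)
Your proposal is essentially the paper's own proof: the same TRADES decomposition, the same rewriting of the boundary event via $z(\mathbf{X})$ (the paper states it as an inequality, you as the equality it in fact is), the same passage $\mathbbm{1}\{F_{\bm{\theta}}(\mathbf{X})=Y\}\le\mathbbm{1}\{p_{\bm{\theta}}(Y|\mathbf{X})>1/2\}$, and the same majorant $\mathbbm{1}(t>1/2)\le 2t$. The step you flag as delicate is precisely the step the paper takes, and you are right to single it out: it is justified only in the binary case, since for $C>2$ the argmax class can have probability as low as $1/C$ and the indicator inequality can then fail pointwise --- a caveat the paper's proof silently shares.
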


\section{Confidence Weighted Regularization (CoW)}
\label{cow}
Motivated from \ref{thm:cow}, we propose the Confidence Weighted Regularization (CoW) which minimizes the following empirical risk:
 \begin{align*}
    \mathcal{R}_{\text{CoW}}(\bm{\theta} ; \left\{(\bm{x}_i, y_i) \right\}_{i=1}^n, \lambda) := \sum\limits_{i=1}^n \bigg\{ \ell^{\text{LS}}(f_{\bm{\theta}}(\bm{x}_i), y_i) + 2 \lambda \cdot \operatorname{KL}(\mathbf{p}_{\bm{\theta}}(\cdot|\bm{x}_i) || \mathbf{p}_{\bm{\theta}}(\cdot|\widehat{\bm{x}}^{\text{pgd}}_i)) \cdot p_{\bm{\theta}}(y_i|\bm{x}_i)\bigg\}
    \label{cow:surrogate}.
\end{align*}


\subsection{Experimental Comparison of CoW to MART}
\label{compare-cow-mart}
We compare the CoW and MART for four attack methods included in AutoAttack \cite{croce2020reliable}.
CoW outperforms MART both on standard accuracies and robust accuracies except for $\text{PGD}^{20}$.

\begin{table}[ht]
    \caption{\textbf{Comparison of MART and CoW}. We compare the robustness of MART \citep{wang2020improving} and ARoW against the four attacks used  in AutoAttack on CIFAR10. The results are based on WRN-34-10. We set $
    \lambda=4$ for CoW.}
    \centering
    \begin{tabular}{c|c|cccc}
    \hline
    \textbf{Method} & \textbf{Standard}  & \textbf{APGD} & \textbf{APGD-DLR}  & \textbf{FAB} & \textbf{SQUARE} \\
    \hline
    \multirow{1}{*}{\text{MART}}
    & 83.17  & \textbf{56.30} & 51.87 & 51.28  & 58.59 \\
    \multirow{1}{*}{\text{CoW}}
    & \textbf{88.53} & 56.15 & \textbf{54.79} & \textbf{56.67} & \text{61.88}  \\
    \hline
    \end{tabular}
\end{table}

We divide the test data into four groups - least correct, less correct, correct and highly correct
according to the values of 
$p_{\bm{\theta}_{\text{PGD}}}(y|\bm{x})$ ($<0.3,$ $0.3 \sim 0.5$,  $0.5 \sim 0.7$ and $>0.7$), where $\bm{\theta}_{\text{PGD}}$ is the parameter learned by PGD-AT \citep{madry2018towards}.
Note that CoW improves the robustness of correct and highly correct samples compared with MART.
We believe that this improvement is due to the regularization term in CoW that enforces more regularization on correct samples.

 \begin{table}[ht]
    \caption{\textbf{Comparing of CoW to MART on sample's robustness.} \# $\textbf{Rob}_{\text{MART}}$ and \# $\textbf{Rob}_{\text{CoW}}$ represent the number of samples which robust to MART and CoW, respectively. \textbf{Diff.} and \textbf{Rate of Impro.} denote (\# $\textbf{Rob}_{\text{CoW}}$ - \# $\textbf{Rob}_{\text{MART}}$ and \textbf{Diff.} / \# $\textbf{Rob}_{\text{MART}}$). The autoattack is used for evaluating the robustness because of gradient masking.}
    \centering
    \begin{tabular}{r|cccc}
    \hline
    Sample's Correctness & \# $ \textbf{Rob}_{\text{MART}}$ & \# $\textbf{Rob}_{\text{CoW}}$ & \textbf{Diff.} & \textbf{Rate of Impro. (\%) } \\
    \hline
    \hline
    Least Correct       & 0 & 3 & 3 & - \\
    Less Correct        & 78 & 59 & -19 & -24.05 \\
    Correct             & 322 & 346 & 24 & 7.45 \\
    Highly Correct      & 4958 & 5072 & \textbf{114} & 2.30 \\
    \hline
    \end{tabular}
    \label{effect-regularization-cow-mart}
    \vskip -0.1in
\end{table}

\section{Detailed settings for the experiments with benchmark datasets}
\label{appB}

\subsection{Experimental Setup}
For CIFAR10, SVHN and FMNIST datasets, input images are normalized into [0, 1]. 
Random crop and random horizontal flip with probability 0.5 are used for CIFAR10 while only random horizontal flip with probability 0.5 is applied for SVHN.
For FMNIST, augmentation is not used.

For generating adversarial examples in the training phase, PGD$^{10}$  with random initial, 
$p=\infty$, $\varepsilon = 8/255$ and $\nu = 2/255$ is used, where PGD$^T$ is the output of the PGD algorithm (\ref{pgd}) with $T$ iterations.
For training prediction models,
the SGD with momentum $0.9$, weight decay  $5 \times 10^{-4}$, the initial learning rate of 0.1 and batch size of 128 is used and the learning rate
is reduced by a factor of 10 at 60 and 90 epochs.
Stochastic weighting average (SWA) \citep{izmailov2018averaging} is employed after 50-epochs for preventing from robust overfitting \citep{rice2020overfitting} as \citet{chen2021robust} does.

For evaluating the robustness in the test phase, PGD$^{20}$ and AutoAttack are used for adversarial attacks, where AutoAttack consists of three white box attacks - APGD and APGD-DLR in \cite{croce2020reliable} and FAB in \cite{croce2020minimally} and one black box attack -  Square Attack \citep{andriushchenko2020square}. To the best of our knowledge, AutoAttack is the strongest attack.
The final model is set to be the best model against PGD$^{10}$ on the test data among those obtained until 120 epochs. 

\subsection{Hyperparameter setting}

\begin{table}[ht]
    \centering
    \caption{\textbf{Selected hyperparameters.} Hyperparameters used in the numerical studies in Section \ref{sec4_1}.}
    \begin{tabular}{c|c|c|ccccc}
    \hline
     Dataset& Model & $\textbf{Method}$ & $ \lambda$ & $\gamma$  & Weight Decay & $\alpha$ & SWA  \\
    \hline
    \hline
    \multirow{6}{*}{\text{CIFAR10}} & \multirow{6}{*}{\text{WRN-34-10}} & TRADES  & 6 & -    & $5e^{-4}$ & -   & o \\
    & & HAT     & 4 & 0.25 & $5e^{-4}$ & -   &  o \\
    & & MART    & 5 & -    & $2e^{-4}$ & -   &  o \\
    & & PGD-AT     & - & - & $5e^{-4}$ & -   &  o \\
    & & GAIR-AT     & - & - & $5e^{-4}$ & -   &  o \\
    & &  ARoW    & 3 & -    & $5e^{-4}$ & 0.2 &  o \\
    \cmidrule(r){1-8}
    \multirow{6}{*}{\text{CIFAR100}} & \multirow{6}{*}{\text{WRN-34-10}} &TRADES          & 6 & -    & $5e^{-4}$ & -   &  o \\
    & & HAT             & 4 & 0.5 & $5e^{-4}$ & -   &  o \\
    & & MART            & 4 & -    & $5e^{-4}$ & -   &  o \\
    & & PGD-AT     & - & - & $5e^{-4}$ & -   &  o \\
    & & GAIR-AT     & - & - & $5e^{-4}$ & -   &  o \\
    & & ARoW            & 4 & -   & $5e^{-4}$ & 0.2 & o \\
    \hline
    \multirow{6}{*}{\text{SVHN}} & \multirow{6}{*}{\text{ResNet-18}} &TRADES   & 6 & -    & $5e^{-4}$ & -   &  x\\
    & & HAT      & 4 & 0.5  & $5e^{-4}$ & -   &  x\\
    & & MART     & 4 & -    & $5e^{-4}$ & -   &  x\\
    & & PGD-AT     & - & - & $5e^{-4}$ & -   &  x \\
    & & GAIR-AT     & - & - & $5e^{-4}$ & -   &  x \\
    & & ARoW     & 3 & -    & $5e^{-4}$ & 0.2 & x\\
    \hline
    \multirow{6}{*}{\text{FMNIST}} & \multirow{6}{*}{\text{ResNet-18}} &TRADES   & 6 & -    & $5e^{-4}$ & -   &  x\\
    & & HAT      & 5 & 0.15 & $5e^{-4}$ & -   &  x\\
    & & MART     & 4 & -    & $5e^{-4}$ & -   &  x\\
    & & PGD-AT     & - & - & $5e^{-4}$ & -   &  x \\
    & & GAIR-AT     & - & - & $5e^{-4}$ & -   &  x \\
    & & ARoW     & 6 & -   & $5e^{-4}$ & 0.25 & x\\
    \hline
    \end{tabular}
    \label{table_hyper}
\end{table}

Table \ref{table_hyper} presents the hyperparameters used on our experiments.
Most of the hyperparameters are set to be the ones used in the previous studies.
The weight decay parameter is set to be $5e^{-4}$ in most experiments, which is the well-known optimal value.
We use stochastic weight averaging (SWA) for CIFAR10 and CIFAR100.
Only for MART \citep{wang2020improving} with WRN-34-10, we use weight decay $2e^{-4}$ 
as \cite{wang2020improving} did since MART works poorly with $5e^{-4}$ with SWA.

 

\section{Checking the Gradient Masking}
\label{gradient-masking}

\begin{table}[H]
    \caption{\textbf{Comparison of GAIR-AT and ARoW}. We compare the robustness of GAIR-AT \citep{zhang2021geometry} and ARoW against the four attacks used  in AutoAttack on CIFAR10. The results are based on WRN-34-10. We set $\lambda=3$ for ARoW.}
    \centering
    \begin{tabular}{c|c|c|cccc}
    \hline
    \textbf{Method} & \textbf{Standard}  & \textbf{PGD} & \textbf{APGD} & \textbf{APGD-DLR}  & \textbf{FAB} & \textbf{SQUARE} \\
    \hline
    \multirow{1}{*}{\text{GAIR-AT}}
    & 85.44(0.17) & \textbf{67.27}(0.07) & \textbf{63.14}(0.16) & 46.48(0.07) &  49.35(0.05) & 55.19(0.16) \\
    \multirow{1}{*}{\text{ARoW}}
    & \textbf{87.65}(0.02) & 58.38(0.09) & 56.07(0.14) & \textbf{55.17}(0.11) & \textbf{56.69}(0.17) & \textbf{63.50}(0.08) \\
    \hline
    \end{tabular}
    \label{gradient-masking_table}
\end{table}

\textit{Gradient masking} \citep{papernot2016science, papernot2017practical} is  the case that
the gradient of the loss for a given non-robust datum 
is almost zero (i.e. $\nabla_{\bm{x}}\ell_{\text{ce}}(f_{\bm{\theta}}(\bm{x}), y) \approx \bm{0}$).
In this case, PGD cannot generate an adversarial example. 
We can check the ocuurence of gradient masking when
a prediction model is robust to the PGD attack but not robust to
attacks such as FAB \citep{croce2020minimally}, APGD-DLR \citep{croce2020reliable} and SQUARE \citep{andriushchenko2020square}.

In Table \ref{gradient-masking_table}, the robustness of GAIR-AT becomes worse much for the three attacks in AutoAttack except APGD \citep{croce2020reliable} while the robustness of ARoW remains stable regardless of the adversarial attacks. 
Since APGD uses the gradient of the loss, 
this observation implies that the gradient masking occurs in GAIR-AT while it does not in ARoW.

Better performance of GAIR-AT
for PGD$^{20}$ attack in Table \ref{table:compare} is not because
GAIR-AT is robust to adversarial attacks but because adversarial examples obtained by PGD are close to clean samples. 
This claim is supported by the fact
that GAIR-AT performs poorly for AutoAttack
while it is still robust to other PGD-based adversarial attacks. Moreover,
gradient masking for GAIR-AT is already reported by \citet{hitaj2021evaluating}.

\section{Detailed setting for the experiments with extra data}
\label{app:extra}

\begin{table}[H]
    \centering
    \caption{\textbf{Selected hyperparameters.} Hyperparameters used in the numerical studies in Section \ref{sec4_3}. We do not employ cutmix augmentation \citep{yun2019cutmix} as does in \cite{rade2022reducing}.}
    \begin{tabular}{c|c|cccccc}
    \hline
     Model & $\textbf{Method}$ & $ \lambda$ & $\gamma$  & Weight Decay & $\alpha$ & EMA & SiLU  \\
    \hline
    \hline
    \multirow{4}{*}{\text{WRN-28-10}} & \cite{carmon2019unlabeled}  & 6 & - & $5e^{-4}$ & - & x & x \\
    & \cite{rebuffi2021data} & 6 & - & $5e^{-4}$ & - & o & o \\
    & HAT & 4 & 0.25 & $5e^{-4}$ & - & o & o \\
    & ARoW & 3.5 & - & $5e^{-4}$ & 0.2 & o & o \\
    \hline
    \multirow{4}{*}{\text{ResNet-18}} & \cite{carmon2019unlabeled}  & 6 & - & $5e^{-4}$ & - & x & x \\
    & \cite{rebuffi2021data} & 6 & - & $5e^{-4}$ & - & o & o \\
    & HAT & 4 & 0.25 & $5e^{-4}$ & - & o & o \\
    & ARoW & 3.5 & - & $5e^{-4}$ & 0.2 & o & o \\
    \hline
    \end{tabular}
    \label{table_hyper:extra}
\end{table}

In Section \ref{sec4_3}, we presented the results of ARoW on CIFAR10 with extra unlabeled data used in \citet{carmon2019unlabeled} and \citet{rebuffi2021data}.
In this section, we provide experimental details.

\citet{rebuffi2021data} use the SiLU activation function and exponential model averaging (EMA) based on TRADES.
For HAT \citep{rade2022reducing} and ARoW, we use the SiLU activation function and exponential model averaging (EMA) with weight decay factor 0.995 as is done in \citet{rebuffi2021data}.
The cosine annealing learning rate scheduler \citep{loshchilov2017sgdr} is used with the batch size 512. The final model is set to be the best model against PGD$^{10}$ on the test data among those obtained until 500 epochs.

\section{Additional Results with Extra Data}

\subsection{Additional Results with Extra Data}

In the main manuscript, we use architecture of ResNet18, while \citet{rade2022reducing} use PreAct-ResNet18.
For better comparison, we conduct an additional experiment with extra data where the same architecture - PreaAct-ResNet18 is used. In addition, we set batch size to 1024 which is used in \citet{rade2022reducing}.
Table \ref{tab:ex} shows that ARoW outperforms HAT both on standard accuracy($+0.29\%$) and robust accuracy($+0.11\%$) against autoattack.


\begin{table}[H]
        \caption{\textbf{Performance with extra data (Carmon et al.) on CIFAR10.} We brought the values in the paper as reported in \citet{rade2022reducing}.}\label{tab:ex}
        \centering
        \begin{tabular}{c|cc}
        \hline
        \textbf{Method} & \textbf{Standard} & \textbf{AutoAttack}  \\
        \hline
        \hline
        HAT                        & 89.02 & 57.67 \\
        ARoW           & 89.31 & 57.78 \\
        \hline
        \end{tabular}
\end{table}

\section{Ablation study}
\label{appD}
\subsection{The performance on CIFAR10 - ResNet18}
\begin{table}[H]
    \centering
    \caption{\textbf{Performance on CIFAR10 with ResNet18.} We conduct the experiment three times with different seeds and present the averages of the accuracies with the standard errors in the brackets. `w/o' stands for `without'.}
    \begin{tabular}{c|ccc}
    \hline
    \textbf{Method} & \textbf{Standard}  & $\textbf{PGD}^{20}$  & \textbf{AutoAttack}  \\
    \hline
    \hline
    PGD-AT                      & 82.42(0.05) & 53.48(0.11) & 49.30(0.07) \\
    GAIR-AT                     & 81.09(0.12) & \textcolor{red}{64.89}(0.04) & \textcolor{red}{41.35}(0.16) \\
    TRADES                      & 82.41(0.07) & 52.68(0.22) & 49.63(0.25) \\
    HAT                         & 83.05(0.03) & 52.91(0.08) & 49.60(0.02) \\
    MART                        & 74.87(0.95) & 53.68(0.30) & 49.61(0.24) \\
    ARoW                        & \textbf{82.53}(0.13) & \textbf{55.08}(0.16) & \textbf{51.33}(0.18) \\
    \hline
    \end{tabular}
\end{table}

\subsection{The trade-off due to the choice of $\lambda$}
\label{ablation:lambda}

Table \ref{ablation_lambda} presents the trade-off between the generalization and robustness accuracies of ARoW on CIFAR10 due to the choice of $\lambda,$
where ResNet18 is used. The trade-off is obviously observed.
\begin{table}[H]
    \centering
    \caption{\textbf{Standard and robust accuracies of ARoW on CIFAR10 for varying $\lambda$.}}
    \begin{tabular}{c|ccc}
    \hline
    $\lambda$  & $\textbf{Standard}$ & $\textbf{PGD}^{20}$ & $\textbf{AutoAttack}$  \\
    \hline
    \hline
    TRADES($\lambda = 6$) & 82.41 & 52.68 & 49.63 \\
    \hline
    ARoW($\lambda = 2.5$) & 85.30 & 53.80 & 49.66 \\
    ARoW($\lambda = 3.0$) & 84.65 & 54.23 & 50.11 \\
    ARoW($\lambda = 3.5$) & 83.86 & 54.13 & 50.15 \\
    ARoW($\lambda = 4.0$) & 83.73 & 54.20 & 50.55 \\
    ARoW($\lambda = 4.5$) & 82.97 & 54.69 & 50.83 \\
    ARoW($\lambda = 5.0$) & 82.53 & 55.08 & 51.33 \\
    \hline
    \end{tabular}
    \label{ablation_lambda}
\end{table}

\subsection{The effect of label smoothing}
\label{ablation:ls}

Table \ref{ablation_alpha} presents the standard and robust accuracies
of ARoW on CIFAR10 for various values of the smoothing parameter $\alpha$ in the label smoothing where
the regularization parameter $\lambda$ is fixed at 3 and ResNet18 is used.

\begin{table}[H]
    \centering
    \caption{\textbf{Standard and robust accuracies of ARoW on CIFAR10 for varying $\alpha$.}}
    \begin{tabular}{c|ccc}
    \hline
    $\alpha$  & $\textbf{Standard}$ & $\textbf{PGD}^{20}$ & $\textbf{AutoAttack}$  \\
    \hline
    \hline
    0.05 & 83.54 & 53.10 & 49.88 \\
    0.10 & 84.10 & 53.29 & 49.75 \\
    0.15 & 84.36 & 53.56 & 49.67 \\
    0.20 & 84.52 & 53.68 & 49.96 \\
    0.25 & 84.48 & 53.53 & 49.93 \\
    0.30 & 84.55 & 53.53 & 49.89 \\
    0.35 & 84.66 & 54.19 & 50.03 \\
    0.40 & 84.65 & 54.23 & 50.11 \\
    \hline
    \end{tabular}
    \label{ablation_alpha}
\end{table}

\subsection{Effect of Stochastic Weight Averaging (SWA)}
\label{swa}

We compare the standard and robust accuracies of the adversarial training algorithms with and
without SWA whose results are summarized in Table \ref{table_swa}.
SWA improves the accuracies for all the algorithms except MART. 
Without SWA, ARoW is competitive to HAT, which is known to be the SOTA method. However,
ARoW dominates HAT when SWA is applied.

\begin{table}[H]
    \centering
    \caption{\textbf{Effects of SWA on CIFAR10 with WideResNet 34-10.} We conduct the experiment three times with different seeds and present the averages of the accuracies with the standard errors in the brackets. `w/o' stands for `without'.}
    \begin{tabular}{c|c|ccc}
    \hline
    & \textbf{Method} & \textbf{Standard}  & $\textbf{PGD}^{20}$  & \textbf{AutoAttack}  \\
    \hline
    \hline
    \multirow{5}{*}{\text{SWA}} 
    & TRADES        & 85.86(0.09) & 56.79(0.08) & 54.31(0.08) \\
    & HAT           & 86.98(0.10) & 56.81(0.17) & 54.63(0.07) \\
    & MART          & 78.41(0.07) & 56.04(0.09) & 48.94(0.09) \\
    & PGD-AT        & 87.02(0.20) & 57.50(0.12) & 53.98(0.14) \\
    & ARoW          & \textbf{87.59}(0.02) & \textbf{58.61}(0.09) & \textbf{55.21}(0.14) \\
    \cmidrule(r){1-5}
    \multirow{5}{*}{\text{w/o-SWA}}
    & TRADES        & 85.48(0.12) & 56.06(0.08) & 53.16(0.17)\\
    & HAT           & 87.53(0.02) & 56.41(0.09) & \textbf{53.38}(0.10)\\
    & MART          & 84.69(0.18) &  55.67(0.13) & 50.95(0.09)\\
    & PGD-AT        & 86.88(0.09) & 54.15(0.16) & 51.35(0.14)\\
    & ARoW          & \textbf{87.60}(0.02) & \textbf{56.47}(0.10) & 52.95(0.06) \\
    \hline
    \end{tabular}
    \label{table_swa}
\end{table}

\subsection{Various perturbation budget $\varepsilon$}

Table \ref{table_varep_train} and \ref{table_varep_test} show the performance of various perturbation budget $\varepsilon$ for train and test phases, respectively.
The regularization parameters of this studies are 3.5 and 6 for ARoW and TRADES, respectively.
We observe that ARoW outperforms TRADES in all cases.

\begin{table}[H]
    \centering
    \caption{\textbf{Performance of various train perturbation budget $\varepsilon$ on CIFAR10 with ResNet-18.} We train models using ARoW and TRADES with varying $\varepsilon$ and evaluate the robustness with same $\varepsilon=8$.}
    \begin{tabular}{c|c|ccc}
    \hline
    $\varepsilon$ for training & \textbf{Method} & \textbf{Standard}  & $\textbf{PGD}^{20}$  & \textbf{AutoAttack}  \\
    \hline
    \hline
    \multirow{2}{*}{\text{4}} 
    & ARoW            & 89.45 & 72.98 & 71.99 \\
    & TRADES          & 88.30 & 72.22 & 71.29 \\
    \cmidrule(r){1-5}
    \multirow{2}{*}{\text{6}} 
    & ARoW            & 86.40 & 62.84 & 60.33 \\
    & TRADES          & 85.13 & 62.05 & 59.91 \\
    \cmidrule(r){1-5}
    \multirow{2}{*}{\text{8}} 
    & ARoW            & 83.34 & 53.93 & 50.37 \\
    & TRADES          & 82.26 & 52.18 & 49.13 \\
    \cmidrule(r){1-5}
    \multirow{2}{*}{\text{10}} 
    & ARoW            & 81.36 & 45.09 & 40.41 \\
    & TRADES          & 80.09 & 42.75 & 38.47 \\
    \cmidrule(r){1-5}
    \multirow{2}{*}{\text{12}} 
    & ARoW            & 80.03 & 37.87 & 32.14 \\
    & TRADES          & 76.49 & 36.68 & 31.60 \\
    \hline
    \end{tabular}
    \label{table_varep_train}
\end{table}

\begin{table}[H]
    \centering
    \caption{\textbf{Performance of various test perturbation budget $\varepsilon$ on CIFAR10 with ResNet-18.} We train models using ARoW and TRADES with $\varepsilon=8$ and evaluate the performance with varying $\varepsilon$.}
    \begin{tabular}{c|c|ccc}
    \hline
    $\varepsilon$ for test & \textbf{Method} & \textbf{Standard}  & $\textbf{PGD}^{20}$  & \textbf{AutoAttack}  \\
    \hline
    \hline
    \multirow{2}{*}{\text{4}} 
    & ARoW            & 83.34 & 70.61 & 69.02 \\
    & TRADES          & 82.26 & 68.50 & 67.17 \\
    \cmidrule(r){1-5}
    \multirow{2}{*}{\text{6}} 
    & ARoW            & 83.34 & 62.50 & 59.87 \\
    & TRADES          & 82.26 & 61.11 & 58.66 \\
    \cmidrule(r){1-5}
    \multirow{2}{*}{\text{8}} 
    & ARoW            & 83.34 & 53.93 & 50.37  \\
    & TRADES          & 82.26 & 52.18 & 49.13  \\
    \cmidrule(r){1-5}
    \multirow{2}{*}{\text{10}} 
    & ARoW            & 83.34 & 45.13 & 41.01 \\
    & TRADES          & 82.26 & 43.99 & 40.25 \\
    \cmidrule(r){1-5}
    \multirow{2}{*}{\text{12}} 
    & ARoW            & 83.34 & 37.10 & 32.67 \\
    & TRADES          & 82.26 & 36.08 & 32.13 \\
    \hline
    \end{tabular}
    \label{table_varep_test}
\end{table}

\subsection{AWP and FAT}
\label{app_combine}
\subsubsection{Adversarial Weight Perturbation (AWP)}
For a given objective function of the adversarial training, AWP \citep{wu2020adversarial} tries to find a flat minimum in the parameter space. \cite{wu2020adversarial} proposes TRADES-AWP, which minimizes
\begin{align*}
    \min _{\bm{\theta}} \max _{\left\|\bm{\delta}_l\right\| \leq \gamma\left\|\bm{\theta}_l\right\|} \frac{1}{n} \sum_{i=1}^n \left\{ \ell_{\text{ce}}(f_{\bm{\theta}+\bm{\delta}}(\bm{x}_i), y_i) + \lambda \cdot \operatorname{KL} (\mathbf{p}_{\bm{\theta}+\bm{\delta}}(\cdot|\bm{x}_i)\lVert \mathbf{p}_{{\bm{\theta}+\bm{\delta}}}(\cdot|\widehat{\bm{x}}^{\text{pgd}}_i))\right\},
\end{align*}
where  $\bm{\theta}_l$ is the weight vector of $l$-th layer
and $\gamma$ is the weight perturbation size.
Inspired by TRADES-AWP, we propose ARoW-AWP which minimizes
\begin{align*}
    \min _{\bm{\theta}} \max _{\left\|\bm{\delta}_l\right\| \leq \gamma\left\|\bm{\theta}_l\right\|} \frac{1}{n} \sum_{i=1}^n \Big\{ & \ell_{\text{ce}}(f_{\bm{\theta}+\bm{\delta}}(\bm{x}_i), y_i) \\
    & +2 \lambda \cdot \operatorname{KL} (\mathbf{p}_{\bm{\theta}+\bm{\delta}}(\cdot|\bm{x}_i)\lVert \mathbf{p}_{{\bm{\theta}+\bm{\delta}}}(\cdot|\widehat{\bm{x}}^{\text{pgd}}_i))
    \cdot  (1 - p_{\bm{\theta}}(y_i | \widehat{\bm{x}}^{\text{pgd}}_i) ) \Big\}.
\end{align*}

In our experiment, we set $\gamma$ to be 0.005 which is the value used in \cite{wu2020adversarial} and do not use SWA as did in original paper.

\subsubsection{Friendly Adversarial Training (FAT)}
\citet{zhang2020attacks} suggests early-stopped PGD which uses a data-adaptive iterations
of PGD when an adversarial example is generated. TRADES-FAT, which uses the early-stopped PGD
in TRADES, minimizes
\begin{equation*}
    \sum\limits_{i=1}^n \ell_{\text{ce}}(f_{\bm{\theta}}(\bm{x}_i), y_i) + \lambda \cdot\operatorname{KL} (\mathbf{p}_{\theta }(\cdot|\bm{x}_i)\lVert \mathbf{p}_{\theta }(\cdot|\widehat{\bm{x}}^{(t_i)}_{i}))
\end{equation*}
where $t_i =  \min  \left\{ \min \{t : F_{\bm{\theta}}(\widehat{\bm{x}}_i^{(t)}) \neq y_i \} + K, T \right\}.$ Here, $T$ is the maximum iterations of PGD.

We propose an adversarial training algorithm ARoW-FAT by combining ARoW and early-stopped PGD. ARoW-FAT minimizes the following regularized empirical risk:
\begin{equation*}
\label{arow-fat}
    \sum\limits_{i=1}^n \left\{ \ell^{\text{LS}}_{\alpha}(f_{\theta}(\bm{x}_i), y_i) +  2 \lambda \cdot \operatorname{KL} (\mathbf{p}_{\theta }(\cdot|\bm{x}_i)\lVert \mathbf{p}_{\theta }(\cdot|\widehat{\bm{x}}^{(t_i)}_i)) \cdot  (1 - p_{\bm{\theta}}(y_i | \widehat{\bm{x}}^{(t_i)}_i)) \right\}.
\end{equation*}

In the experiments, we set $K$ to be 2,  which is the value used in \cite{zhang2020attacks}. 

\section{Improved fairness}
\label{appE}

Table \ref{table5} shows that ARoW improves the fairness in terms of class-wise accuracies.
the worst-class accuracy (WC-Acc) and standard deviation of class-wise accracies (SD) are defined by $\text{WC-Acc} = \underset{c}{\min}\; \text{Acc}(c)$ and $\text{SD} = \sqrt{\dfrac{1}{C}\sum\limits_{c=1}^C (\text{Acc}(c)-\bar{\text{Acc}}})^2$
where $\text{Acc}(c)$ is the accuracy of class $c$ and $\bar{\text{Acc}}$ is the mean of class-wise accuracies.

\begin{table}[H]
    \caption{\textbf{Comparison of per-class robustness and generalization of TRADES and ARoW.} $\textbf{Rob}_{\text{TRADES}}$ and $\textbf{Rob}_{\text{ARoW}}$ are the robust accuracies against $\text{PGD}^{20}$ of TRADES and ARoW, respectively. $\textbf{Stand}_{\text{TRADES}}$ and $\textbf{Stand}_{\text{ARoW}}$ are the standard accuracies.}
    \centering
    \begin{tabular}{c|cccc}
    \hline
    \textbf{Class} & $\textbf{Rob}_{\text{TRADES}}$ & $\textbf{Rob}_{\text{ARoW}}$ &  $\textbf{Stand}_{\text{TRADES}}$ &  $\textbf{Stand}_{\text{ARoW}}$ \\
    \hline
    \hline
    0(Airplane)    & 64.8 & 66.7 & 88.3 & 91.6 \\
    1(Automobile)  & 77.5 & 77.5 & 93.7 & 95.3 \\
    2(Bird)        & 38.5 & 43.1 & 72.5 & 80.6 \\
    3(Cat)         & 26.1 & 30.2 & 65.9 & 75.1 \\
    4(Deer)        & 35.6 & 40.3 & 83.4 & 87.5 \\
    5(Dog)         & 48.6 & 47.2 & 76.0 & 79.3 \\
    6(Frog)        & 67.8 & 63.6 & 94.2 & 95.2 \\
    7(Horse)       & 69.7 & 69.3 & 91.0 & 92.7 \\
    8(Ship)        & 62.3 & 70.1 & 90.9 & 94.9 \\
    9(Truck)       & 75.3 & 76.3 & 93.5 & 93.5 \\
    \hline
    \end{tabular}
    \label{per-class-comparison}
\end{table}

In Table \ref{per-class-comparison}, we present the per-class robust and standard accuracies of the prediction models trained by TRADES and ARoW. 
We can see that ARoW is highly effective 
for classes difficult to be classified such as Bird, Cat, Deer and Dog.
For such classes, ARoW improves much not only the standard accuracies but
also the robust accuracies.
For example, in the class `Cat', which is the most difficult class
(the lowest standard accuarcy for TRADES and ARoW),
the robustness and generalization are improved by 4.1 percentage point $(26.1\% \rightarrow 30.2\%$) and 9.2 percentage point $(65.9\% \rightarrow 75.1\%)$ 
by ARoW compared with TRADES, respectively. This desirable results would be mainly due to the new regularization term in ARoW. Usually, difficult classes are less robust
to adversarial attacks. By putting more regularization on less robust classes, ARoW improves the accuracies of less robust classes more.  

\end{document}